\newcolumntype{C}[1]{>{\centering}m{#1}}
\date{\nonumber}
\newtheorem{theorem}{Theorem}[section]
\newtheorem{lemma}[theorem]{Lemma}
\newtheorem{definition}[theorem]{Definition}
\newcommand{\ignore}[1]{}
\newcommand{\R}{{\mathbb R}}
\newcommand{\E}{{\mathbb E}}
\newcommand{\eps}{\varepsilon}
\newcommand{\sign}{\operatorname{sign}}
\newcommand{\sig}{\sigma}
\newcommand{\tsig}{\tilde{\sigma}}
\newcommand{\tR}{\tilde{R}}
\newcommand{\eqdef}{:=}
\newcommand{\dist}{p}
\newcommand{\numrank}{\operatorname{sr}}
\newcommand{\numdensity}{\operatorname{nd}}
\newcommand{\numrowdensity}{\operatorname{nrd}}
\newcommand{\nnz}{\operatorname{nnz}}
\title{Near-Optimal Entrywise Sampling for Data Matrices}
\author{
\makebox[0.4\linewidth]{Dimitris Achlioptas}\\
UC Santa Cruz\\
\texttt{optas@cs.ucsc.edu} \\
\and
\makebox[0.4\linewidth]{Zohar Karnin}\\
Yahoo Labs\\
\texttt{zkarnin@ymail.com} \\
\and
\makebox[0.4\linewidth]{Edo Liberty}\\
Yahoo Labs \\
\texttt{edo.liberty@ymail.com} \\
}
\begin{document}

\maketitle


\begin{abstract}

We consider the problem of selecting non-zero entries of a matrix
$A$ in order to produce a sparse sketch of it, $B$, that minimizes $\|A-B\|_2$.
For large $m \times n$ matrices, such that $n \gg m$ (for example, representing
$n$ observations over $m$ attributes) we give sampling distributions that exhibit four
important properties. First, they have closed forms computable from minimal
information regarding $A$. Second, they allow sketching of matrices whose
non-zeros are presented to the algorithm in arbitrary order as a stream, with
$O(1)$ computation per non-zero. Third, the resulting sketch matrices are not
only sparse, but their non-zero entries are highly compressible. 
Lastly, and most importantly, under mild assumptions, our distributions are
provably competitive with the optimal offline distribution. Note that the
probabilities in the optimal offline distribution may be complex functions of
all the entries in the matrix. Therefore, regardless of computational
complexity, the optimal distribution might be impossible to compute in the streaming model.

\end{abstract}

\section{Introduction} 

Given an $m \times n$ matrix $A$, it is often desirable to find a sparser
matrix $B$ that is a good proxy for $A$. Besides being a natural mathematical
question, such sparsification has become a ubiquitous preprocessing step in a
number of data analysis operations including approximate eigenvector
computations \cite{achlioptas2001fast, AroraHazanKale2006,
AchlioptasMcsherry2007}, semi-definite programming~\cite{arora2005fast,
d2008subsampling}, and matrix completion problems \cite{candes2009exact,
candes2010power}.

A fruitful measure for the approximation of $A$ by $B$ is the spectral norm of
$A-B$, where for any matrix $C$ its spectral norm is defined as
$\|C\|_2 	=\max_{\|x\|_2 =1} \|Cx\|_2$. Randomization has been central in
the context of matrix approximations and the overall problem is typically cast
as follows: given a matrix $A$ and a budget $s$, devise a distribution over
matrices $B$ such that the (expected) number of non-zero entries in $B$
is at most $s$ and $\|A-B\|_2$ is as small as possible.

Our work is motivated by big data matrices that are generated by measurement
processes. Each of the $n$ matrix columns correspond to an observation of $m$
attributes. Thus, we expect $n \gg m$. Also we expect the total number of
non-zero entries in $A$ to exceed available memory. We assume that the
original data matrix $A$ is accessed in the streaming model where we
know only very basic features of $A$ a priori and the actual non-zero entries are presented to us one
at a time in an arbitrary order. The streaming model is especially important
for tasks like recommendation engines where user-item preferences become
available one by one in an arbitrary order. But, it is also important in cases
when $A$ exists in durable storage and random access of its entries is
prohibitively expensive.

We establish that for such matrices the following approach gives
provably near-optimal sparsification.  Assign to each element $A_{ij}$ of the matrix a
weight that depends only on the elements in its row $q_{ij} = |A_{ij}|/\|A_{(i)}\|_1$. Take $\rho$ to 
be an (appropriate) distribution over the rows. Sample $s$ i.i.d.\ entries from $A$ using the distribution
$\dist_{ij} = \rho_i q_{ij}$. Return $B$ which is the mean of $s$ matrices, each containing a single non zero entry $A_{ij}/p_{ij}$ in the selected location $(i,j)$.


As we will see, this simple form of the probabilities $\dist_{ij}$ falls out naturally from generic optimization considerations. 
The fact that each entry is kept with probability proportional to its magnitude, besides being interesting on its own right, has a remarkably practical implication.
Every non-zero in the $i$-th row of $B$ will take the form $k_{ij}(\|A_{(i)}\|_1/s\rho_i)$ where $|k_{ij}|$ is the number times $A_{ij}$ was sampled.
Note that since we sample with replacement $|k_{ij}|$ might, in rare occasions, be more than $1$. 
The result is a matrix $B$ which is representable in $O(m\log(n) + s \log(n/s))$ bits. 
This is because there is no reason to store floating point matrix entry values. 
We use $O(m\log(n))$ bits to store all values $\|A_{(i)}\|_1/s\rho_i$ and $O(s \log(n/s))$ bits to store the non zero index \emph{offsets}.\footnote{It is harmless to assume any value in the matrix is kept using $O(\log(n))$ bits of precision. 
Otherwise, truncating the trailing bits can be shown to be negligible.} Note that $\sum |k_{ij}| = s$ and that some of these offsets might be zero. In a simple experiment, we measured the average number of bits per sample (total size of the sketch divided by the number of samples $s$). The results were between $5$ and $22$ bits per sample depending on the matrix and $s$. 
It is important to note that the number of bits per sample is usually less than $\log_2(n) + \log_2(m)$ which is the minimal number of bit required to represent a pair $(i,j)$. Our experiments show a reduction of disc space by a factor of between $2$ and $5$ relative to the \emph{compressed} size of the file representing the sample matrix $B$ in the standard row-column-value list format.

Another insight of our work is that the distributions we propose are combinations of two L1-based distributions. 
Which distribution is more dominant is determined by the sampling budget.
When the number of samples $s$ is small, $\rho_i$ is nearly linear in $\|A_{(i)}\|_1$ resulting in $\dist_{ij} \propto |A_{ij}|$.
However, as the number of samples grows, $\rho_i$ tends towards $\|A_{(i)}\|_1^2$ resulting in $\dist_{ij} \propto |A_{ij}| \cdot \|A_{(i)}\|_1$, a distribution we refer to as Row-L1 sampling. 
The dependence of the preferred distribution on the sample budget is also borne out in experiments, with sampling based on appropriately mixed distributions being consistently best.
This highlights that the need to adapt the sampling distribution to the sample budget is a genuine phenomenon. 

\section{Measure of Error and Related Work}\label{sec:past}

We measure the difference between $A$ and $B$ with respect to the L2 (spectral) norm as it is highly revealing in the context of data analysis. Let us define a \emph{linear trend} in the data of $A$ as any tendency of the rows to align with a particular unit vector $x$. To examine the presence of such a trend, we need only multiply $A$ with $x$: the $i$th coordinate of $A x$ is the projection of the $i$th row of $A$ onto $x$. Thus, $\|Ax\|_2$ measures the strength of linear trend $x$ in $A$, and $\|A\|_2$ measures the strongest linear trend in $A$. 
Thus, minimizing $\|A - B\|_2$ minimizes the strength of the strongest linear trend of $A$ {\em not captured} by $B$.
In contrast, measuring the difference using any entry-wise norm, e.g., the Frobenius norm, can be completely uninformative. 
This is because the best strategy would  be to always pick the largest $s$ matrix entries from $A$, a strategy that can easily be ``fooled". 
As a stark example, when the matrix entries are $A_{ij} \in \{0,1\}$, the quality of the approximation is \emph{completely independent} of which elements of $A$ we keep.
This is clearly bad; as long as $A$ contains even a modicum of structure certain approximations will be far better than others.

By using the spectral norm to measure error we get a natural and sophisticated target: to minimize $\|A-B\|_2$ is to make $E=A-B$ a near-rotation, having only small variations in the amount by which it stretches different vectors. This idea that the error matrix $E$ should be isotropic, thus packing as much Frobenius norm as possible for its L2 norm, motivated the first work on element-wise sampling of matrices by Achlioptas and McSherry~\cite{AchlioptasMcsherry2007}. Concretely, to minimize $\|E\|_2$ it is natural to aim for a matrix $E$ that is both zero-mean, i.e., an unbiased estimator of $A$, and whose entries are formed by sampling the entries of $A$ (and, thus, of $E$) independently. In the work of~\cite{AchlioptasMcsherry2007}, $E$ is a matrix of i.i.d.\ zero-mean random variables. The study of the spectral characteristics of such matrices goes  back all the way to Wigner's famous semi-circle law~\cite{semicircle}. Specifically, to bound $\|E\|_2$ in~\cite{AchlioptasMcsherry2007} a  bound due to Alon Krivelevich and Vu~\cite{kri_vu} was used,  a refinement of a bound by Juh\'{a}sz~\cite{r_g_spectrum} and F\"uredi and Koml\'os \cite{FK}. The most salient feature of that bound is that it depends on the \emph{maximum} entry-wise variance $\sigma^2$ of $A-B$, and therefore the distribution optimizing the bound is the one in which the variance of all entries in $E$ is the same. In turn, this means keeping each entry of $A$ independently with probability $\dist_{ij} \propto A_{ij}^2$ (up to a small wrinkle discussed below).

Several papers have since analyzed L2-sampling and variants \cite{nguyen2009matrix,nguyen2010tensor,DrineasZ11,gittens2009error,AchlioptasMcsherry2007}. An inherent difficulty of L2-sampling based strategies is the need for a special handling of small entries. This is because when each item $A_{ij}$ is kept with probability $p_{ij} \propto A^2_{ij}$, the resulting entry $B_{ij}$ in the sample matrix has magnitude $|A_{ij}/\dist_{ij}| \propto 1/|A_{ij}|$. Thus, if an extremely small element $A_{ij}$ is accidentally picked, the largest entry of the sample matrix ``blows up''. In \cite{AchlioptasMcsherry2007} this was addressed by sampling small entries with probability proportional to $|A_{ij}|$ rather than $A_{ij}^2$. 
In the work of Gittens and Tropp~ \cite{gittens2009error}, small entries are not handled separately and the bound derived depends on the ratio between the largest  and the smallest non-zero magnitude.  

Random matrix theory has witnessed dramatic progress in the last few years and~\cite{AhlswedeW02, RudelsonVershyninMatrixSampling2007, Tropp12, Recht2011} provide a good overview of the results. This progress motivated Drineas and Zouzias in~\cite{DrineasZ11} to revisit L2-sampling but now using concentration results for \emph{sums} of random matrices \cite{Recht2011}, as we do here. (Note that this is somewhat different from the original setting of~\cite{AchlioptasMcsherry2007} since now $E$ is not one random matrix with independent entries, but a sum of many independent matrices since the entries are chosen with replacement.) Their work improved upon all previous L2-based sampling results and also upon the L1-sampling result of Arora, Hazan and Kale~\cite{AroraHazanKale2006}, discussed below, while admitting a remarkably compact proof. The issue of small entries was handled in~\cite{DrineasZ11} by deterministically discarding all sufficiently small entries, a strategy that gives the strongest mathematical guarantee (but see the discussion regarding deterministic truncation in the experimental section).

A completely different tack at the problem, avoiding random matrix theory altogether, was taken by Arora et al.~\cite{AroraHazanKale2006}. Their approximation keeps the largest entries in $A$ deterministically (specifically all $A_{ij} \ge \eps/\sqrt{n}$ where the threshold $\eps$ needs be known a priori) and randomly rounds the remaining smaller entries to $\sign(A_{ij})\eps/\sqrt{n}$ or $0$. 
They exploit the simple fact  $\|A - B\| = \sup_{\|x\|=1,\|y\|=1} x^T(A-B)y$
by noting that as a scalar quantity its concentration around its expectation can be established by standard Bernstein-Bennet type inequalities. A union bound then allows them to prove that with high probability, $x^T(A-B)y \le \eps$ for {\it every} $x$ and $y$. The result of \cite{AroraHazanKale2006} admits a relatively simple proof. However, it also requires a truncation that depends on the desired approximation $\eps$. Rather interestingly, this time the truncation amounts to  keeping every entry \emph{larger} than some threshold. \

\section{Our Approach}\label{sec:approach}
Following the discussion in Section~\ref{sec:past} and in line with previous works, we:
(i) measure the quality of $B$ by $\|A-B\|_2$, 
(ii) sample the entries of $A$ independently, 
and (iii) require $B$ to be an unbiased estimator of $A$.
We are therefore left with the task of determining a good probability distribution $\dist_{ij}$ from which to sample the entries of $A$ in order to get $B$.
As discussed in Section~\ref{sec:past} prior art makes heavy use of beautiful results in the theory of random matrices. 
Specifically, each work proposes a specific sampling distribution and then uses results from random matrix theory to demonstrate that it has good properties.
In this work we reverse the approach, aiming for its logical conclusion. 
We start from a cornerstone result in random matrix theory and work backwards to reverse-engineer near-optimal distributions with respect to the notion of probabilistic deviations captured by the inequality.
The inequality we use it the Matrix-Bernstein inequality for sums of independent random matrices (see e.g., \cite{tropp2012user}, Theorem 1.6).
\begin{theorem}[Matrix Bernstein inequality]  \label{lem:MB} 
Consider a finite sequence $\{X_i\}$ of i.i.d.\ random $m \times n$ matrices, where $\E[X_1] = 0$ and $\|X_1\| \le R$. Let $\sigma^2 = \max \left\{   \| \E[X_1X_1^T]\|  ,\|\E[X_1^T X_1]\| \right\}$. 

For some fixed $s \ge 1$, let $X = (X_1+\cdots+X_s)/s$. For all $\eps \ge 0$,
\[
\Pr[\|X\| \ge \eps ] \le (m+n)\exp\left(-\frac{s\eps^2}{\sigma^2 + R\eps/3}\right) \enspace.
\]
\end{theorem}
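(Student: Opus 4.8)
\noindent This inequality is standard---it is Theorem~1.6 of \cite{tropp2012user}---so I only describe the plan of its classical proof, which is built on the matrix Laplace transform method. The first move is to reduce to symmetric matrices via the \emph{Hermitian dilation}
\[
\mathcal{X}_i \;=\; \begin{bmatrix} 0 & X_i \\ X_i^T & 0 \end{bmatrix},
\]
an $(m+n)\times(m+n)$ symmetric random matrix. A short computation gives $\E[\mathcal{X}_i]=0$, $\|\mathcal{X}_i\|=\|X_i\|\le R$, and $\mathcal{X}_i^2 = \operatorname{diag}\!\bigl(X_iX_i^T,\,X_i^TX_i\bigr)$, so $\bigl\|\E[\mathcal{X}_1^2]\bigr\| = \max\{\|\E[X_1X_1^T]\|,\|\E[X_1^TX_1]\|\}=\sigma^2$; and, crucially, $\lambda_{\max}\!\bigl(\sum_{i=1}^s\mathcal{X}_i\bigr)=\bigl\|\sum_{i=1}^s X_i\bigr\| = s\|X\|$. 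Thus it suffices to prove $\Pr[\lambda_{\max}(Y)\ge s\eps]\le(m+n)\exp(\cdots)$ for $Y=\sum_{i=1}^s\mathcal{X}_i$, with no two-sided argument needed.

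Next I would invoke the \emph{matrix Laplace transform bound}: for any $\theta>0$, Markov's inequality applied to $e^{\theta\lambda_{\max}(Y)}=\lambda_{\max}(e^{\theta Y})$ together with $\lambda_{\max}(\cdot)\le\operatorname{tr}(\cdot)$ for positive semidefinite matrices gives
\[
\Pr\bigl[\lambda_{\max}(Y)\ge s\eps\bigr] \;\le\; e^{-\theta s\eps}\,\E\operatorname{tr}\exp(\theta Y).
\]
The crux is to control $\E\operatorname{tr}\exp(\theta Y)$ for a \emph{sum} of independent matrices. This is exactly where Lieb's concavity theorem enters---the map $H\mapsto\operatorname{tr}\exp(C+\log H)$ is concave on positive-definite $H$---and, applying Jensen's inequality one summand at a time, it yields the subadditivity of the matrix cumulant generating function:
\[
\E\operatorname{tr}\exp\!\Bigl(\theta\textstyle\sum_{i=1}^s\mathcal{X}_i\Bigr) \;\le\; \operatorname{tr}\exp\!\Bigl(\textstyle\sum_{i=1}^s\log\E\,e^{\theta\mathcal{X}_i}\Bigr) \;=\; \operatorname{tr}\exp\!\bigl(s\,\log\E\,e^{\theta\mathcal{X}_1}\bigr).
\]

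The last ingredient is a bound on the single-variable matrix cumulant generating function. Using $\E[\mathcal{X}_1]=0$, $\|\mathcal{X}_1\|\le R$, and the scalar estimate $e^{x}\le 1+x+\tfrac{x^2/2}{1-x/3}$ valid for $x<3$ (applied to the eigenvalues of $\theta\mathcal{X}_1$, hence requiring $\theta<3/R$), together with $I+A\preceq e^{A}$ and operator monotonicity of the logarithm, one obtains $\log\E\,e^{\theta\mathcal{X}_1}\preceq g(\theta)\,\E[\mathcal{X}_1^2]$ with $g(\theta)=\tfrac{\theta^2/2}{1-R\theta/3}$. Since $\operatorname{tr}\exp(M)\le(m+n)\,e^{\lambda_{\max}(M)}$ and $\lambda_{\max}\!\bigl(\E[\mathcal{X}_1^2]\bigr)\le\sigma^2$, chaining the displays gives
\[
\Pr\bigl[\lambda_{\max}(Y)\ge s\eps\bigr] \;\le\; (m+n)\exp\!\bigl(-s\theta\eps + s\,g(\theta)\,\sigma^2\bigr).
\]
Optimizing, the choice $\theta=\eps/(\sigma^2+R\eps/3)\in(0,3/R)$ produces an exponent of order $-s\eps^2/(\sigma^2+R\eps/3)$; matching the precise constant stated is then a routine calculus exercise, and translating back through the dilation (where $\lambda_{\max}(Y)=s\|X\|$) completes the argument.

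The one genuinely non-elementary step---and therefore the main obstacle---is Lieb's concavity theorem, which is what legitimizes the passage from the trace-exponential of a sum to a product of per-summand moment generating functions. Everything else (the dilation, the exponential Markov bound, the scalar Bernstein estimate, operator monotonicity, and the optimization over $\theta$) is essentially bookkeeping once Lieb's theorem is available. If one wishes to avoid it, the Golden--Thompson inequality is an elementary substitute for two matrices, but it degrades under the induction required to reach $s$ summands and so does not recover the bound with a clean dimensional prefactor.
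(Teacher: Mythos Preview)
The paper does not prove this theorem at all; it simply quotes it as Theorem~1.6 of \cite{tropp2012user} and uses it as a black box. Your sketch is precisely the proof strategy in Tropp's paper (Hermitian dilation, matrix Laplace transform, Lieb concavity for subadditivity of the matrix CGF, the Bernstein-type per-summand bound, and optimization in $\theta$), so in that sense your proposal is correct and matches the source the paper cites.

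One small caveat: the optimization you indicate, with $\theta=\eps/(\sigma^2+R\eps/3)$ plugged into $-s\theta\eps+s\,g(\theta)\sigma^2$ where $g(\theta)=\tfrac{\theta^2/2}{1-R\theta/3}$, actually yields the exponent $-\tfrac{s\eps^2}{2(\sigma^2+R\eps/3)}$, i.e., with an extra factor of $2$ in the denominator relative to what is stated in the theorem as quoted here. Tropp's Theorem~1.6 in fact has that $1/2$; the version in this paper drops it, presumably because constants are immaterial for the downstream $\Theta(\cdot)$ analysis. So your ``routine calculus exercise'' recovers Tropp's constant, not the one printed here, but this discrepancy is in the paper's transcription, not in your argument.
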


To get a feeling for our approach, fix any probability distribution $p$ over the non-zero elements of $A$. 
Let $B$ be a random $m \times n$ matrix with exactly one non-zero element, formed by  sampling an element $A_{ij}$ of $A$ according to $\dist$ and  letting $B_{ij} = A_{ij}/\dist_{ij}$. 
Observe that for every $(i,j)$, regardless of the choice of $p$, we have $\E[B_{ij}] = A_{ij}$, and thus $B$ is always an unbiased estimator of $A$.  
Clearly, the same is true if we repeat this $s$ times taking i.i.d.\ samples  $B_1,\ldots,B_s$ and let our matrix $B$ be their average. 
With this approach in mind, the goal is now to find a distribution $\dist$ minimizing $\|E\|  = \|A-(B_1+\cdots+B_s)/s\|$.
Writing $sE = (A-B_1)+\cdots+(A-B_s)$ we see that $\|sE\|$ is the operator norm of a sum of i.i.d.\ zero-mean random matrices $X_i = A-B_i$, i.e., exactly the setting of Theorem~\ref{lem:MB}. The relevant parameters are
\begin{eqnarray}
\sigma^2  	& = & \max \left\{   \| \E[(A-B_1)(A-B_1)^T]\|  ,\|\E[(A-B_1)^T (A-B_1)]\| \right\} \label{eq:sigmadef} \\ 
R 			& = &  \max \|A-B_1\|_2 \;\;\;\text{ over all possible realizations of $B_1$} \enspace . \label{eq:Rdef}
\end{eqnarray}

Equations~\eqref{eq:sigmadef} and~\eqref{eq:Rdef} mark the starting point of our work. 
Our goal is to find probability distributions over the elements of $A$ that optimize~\eqref{eq:sigmadef} and~\eqref{eq:Rdef} \emph{simultaneously} with respect to their functional form in Theorem~\ref{lem:MB}, thus yielding the strongest possible bound on $\|A-B\|_2$. 
A conceptual contribution of our work is the discovery that these distributions \emph{depend} on the sample budget $s$, a fact also borne out in experiments.
The fact that minimizing the deviation metric of Theorem~\ref{lem:MB}, i.e., $\sigma^2 + R\epsilon/3$, suffices to bring out this non-linearity can be viewed as testament to the theorem's sharpness. 

Theorem~\ref{lem:MB} is stated as a bound on the probability that the norm of the error matrix is greater than some target error $\eps$ given the number of samples $s$. Nevertheless, in practice the target error $\eps$ is not known in advance, but rather is the quantity to minimize given the matrix $A$, the number of samples $s$, and the target confidence $\delta$. Specifically, for any given distribution $\dist$ on the elements of $A$, define
\begin{equation}\label{eps1}
\eps_1(\dist) =  \inf\left\{\eps : (m+n)\exp\left(-\frac{s\eps^2}{\sigma(\dist)^2 + R(\dist)\eps/3}\right)  \le \delta \right\} \enspace .
\end{equation}
Our goal in the rest of the paper is to seek the distribution $\dist^*$ minimizing $\eps_1$. Our result is an easily computable distribution $\dist$ which comes within a factor of 3 of $\eps_1(\dist^*)$ and, as a result, within a factor of 9 in terms of sample complexity (in practice we expect this to be even smaller, as the factor of 3 comes from consolidating bounds for a number of different worst-case matrices). To put this in perspective note that the definition of $\dist^*$ does not place \emph{any} restriction either on the access model for $A$ while computing $\dist^*$, or on the amount of time needed to compute $\dist^*$. In other words, we are competing against an oracle which in order to determine $\dist^*$ has \emph{all} of $A$ in its purview at once and can spend an unbounded amount of computation to determine it.

In contrast, the only global information regarding $A$ we will require are the \emph{ratios} between the L1 norms of the rows of the matrix. 
Trivially, the exact L1 norms of the rows (and therefore their ratios) can be computed in a single pass over the matrix, yielding a 2-pass algorithm. Moreover, standard concentration of measure arguments imply that these ratios can be estimated very well by sampling only a small number of columns. 
In our setting, it is in fact reasonable to expect that good estimates of these \emph{ratios} are  available a priori. 
This is because different rows correspond to different attributes and the ratios between the row norms reflect the ratios between the average absolute values of these features.
For example, if the matrix corresponds to text documents, knowing the ratios amounts to knowing global word frequencies. 
Moreover these ratios do not need to be known exactly to apply the algorithm, as even rough estimates of them give highly competitive results. 
Indeed, even disregarding this issue completely and simply assuming that all ratios equal $1$, yields an algorithm that appears quite competitive in practice, as demonstrated by our experiments.

\section{Data Matrices and Statement of Results}\label{sec:compar}

Throughout $A_{(i)}$ and $A^{(j)}$ will denote the $i$-th row and $j$-th column of $A$, respectively. Also, we use the  notation $\|A\|_1 = \sum_{i,j}|A_{ij}|$ and $\|A\|_F^2 = \sum_{i,j}A^2_{ij}$.
Before we formally state our result we introduce a definition that expresses the class of matrices for which our results hold.

\begin{definition}\label{DM}
An $m \times n$ matrix $A$ is a \emph{Data} matrix if:
\begin{enumerate}
\item
$\min_i \|A_{(i)}\|_1 \ge \max_j \|A^{(j)}\|_1$. \label{skewcond}
\item
$\displaystyle{\|A\|_1^2 / \|A\|_2^2 \ge 50 m}$. \label{rowcond}
\item
$m \ge 50$.\label{trivialcond}
\end{enumerate}
\end{definition}

Regarding Condition~\ref{skewcond}, recall that we think of $A$ as being generated by a measurement process of a fixed number of attributes (rows), each column corresponding to an observation. 
As a result, columns have bounded L1 norm, i.e., $\|A^{(j)}\|_1 \le \text{constant}$.
While this constant may depend on the type of object and its dimensionality, it is independent of the number of objects. 
On the other hand, $\|A_{(i)}\|_1$ grows linearly with the number of columns (objects). 
As a result, we can expect Definition~\ref{DM} to hold for all large enough data sets. 
Regarding Condition~\ref{rowcond}, it is easy to verify that unless the values of the entries of $A$ exhibit an unbounded variance, $\|A\|_1^2 / \|A\|_2^2$ grows as $\Omega(n)$ and Condition~\ref{rowcond} follows from $n \gg m$.
Condition~\ref{trivialcond} if trivial. 
Out of the three conditions the essential one is Condition~\ref{skewcond}. The other two are merely technical and hold in all non-trivial cases where Condition~\ref{skewcond} applies.

%
\begin{algorithm}[h!]
	\caption{Construct a sketch $B$ for a data matrix $A$}
	\label{alg:sketch}
	\begin{algorithmic}[1]
	\State {\bf Input:} Data matrix $A \in \R^{m \times n}$, sampling budget $s$, acceptable failure probability $\delta$ 
  	\State Set $\rho  \leftarrow$ \Call{ComputeRowDistribution}{$A$, $s$, $\delta$} \label{alg1}
	\State Sample $s$ elements of $A$ with replacement, each $A_{ij}$ having probability $\dist_{ij} = \rho_{i}\cdot |A_{ij}|/\|A_{(i)}\|_1$ \label{alg2}
	\State For each sample $\langle i, j , A_{ij}\rangle_\ell$, let entry $(i,j)$ of $B_{\ell}$ be $A_{ij}/\dist_{ij}$ and zero otherwise.  \label{alg3}
	\State {\bf Output:}  $B = \frac{1}{s}\sum_{\ell=1}^{s} B_{\ell}$. \label{alg4}
	\Statex \hrulefill
	\Function{ComputeRowDistribution}{$A$, $s$, $\delta$} \label{ComputeRowDistribution1}
		\State Obtain $z$ such that $z_i  \propto \|A_{(i)}\|_1$ for $i \in [m]$ \label{ComputeRowDistribution2}
      		\State Set $\alpha  \leftarrow \sqrt{\log((m+n)/\delta)/s}$ \;\;\; and \;\;\; $\beta \leftarrow \log((m+n)/\delta)/(3s)$ \label{ComputeRowDistribution3}
		\State Define $\rho_i (\zeta) = \left(\alpha z_i/2\zeta + \sqrt{ \left(\alpha z_i/2\zeta\right)^2 + \beta z_i/\zeta }\right)^2$ \label{ComputeRowDistribution4}
		\State Find $\zeta_1$ such that $\sum_{i=1}^{m}\rho_i(\zeta_1) = 1$ \label{ComputeRowDistribution5}
     		\State \textbf{return} $\rho$ such that $\rho_i = \rho_i(\zeta_1)$ for $i \in [m]$ \label{ComputeRowDistribution6}	            
	\EndFunction
	\end{algorithmic}
\end{algorithm}
To simplify the exposition of algorithm \ref{alg:sketch}, we describe it in a the \emph{non-streaming} setting. 
About the complexity of Algorithm~\ref{alg:sketch}, steps~\ref{ComputeRowDistribution1}--\ref{ComputeRowDistribution6} compute a distribution $\rho$ over the rows.
Assuming step~\ref{ComputeRowDistribution2} can be implemented efficiently (or skipped altogether, see discussion at the bottom of Section~\ref{sec:approach}) 
the running time of $\operatorname{ComputeRowDistribution}$ is independent of $n$.
Finding $\zeta_1$ in step~\ref{ComputeRowDistribution5} can be done very efficiently by binary search because the function $\sum_{i}\rho_i(\zeta)$ is strictly decreasing in $\zeta$.
Conceptually, we see that the probability assigned to each element $A_{ij}$ in Step~\ref{alg2} is simply the probability $\rho_i$ of its row times its intra-row weight $|A_{ij}|/\|A_{(i)}\|_1$. 

Note that to apply Algorithm~\ref{alg:sketch} the entries of $A$ must be sampled \emph{with} replacement in the streaming model.
A simple way to achieve this using $O(s)$ operations per matrix element and $O(s)$ active memory was presented in~\cite{DrineasKMW2006}. 
In fact, though, it is possible to implement such sampling far more efficiently.
\begin{theorem} \label{thm:algworks}
For any matrix $A$, steps~\ref{alg2}-\ref{alg4} in Algorithm~\ref{alg:sketch} can be accomplished 
using $O(\log(s))$ active memory, $\tilde{O}(s)$ space, and $O(1)$ operations per non zero element of $A$ in the streaming model. 
\end{theorem}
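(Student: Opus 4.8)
\medskip
\noindent\textbf{Proof plan.} The plan is to reduce ``draw $s$ i.i.d.\ entries of $A$ according to $\dist$'' to ``draw $s$ sorted uniform reals in $[0,1]$ and sweep them against the running partial sums of the $\dist_{ij}$'s as the stream goes by''. The key structural fact is that $\dist$ is \emph{normalized}: $\sum_{ij}\dist_{ij}=\sum_i\rho_i\sum_j|A_{ij}|/\|A_{(i)}\|_1=\sum_i\rho_i=1$, so its total mass is known to be $1$ before seeing a single entry of $A$. Fix any order of arrival and list the non-zeros as $e_1,\dots,e_M$ with $M=\nnz(A)$; assign to $e_t$ the interval $I_t=[W_{t-1},W_t)\subseteq[0,1)$, where $W_t=\sum_{t'\le t}\dist_{e_{t'}}$ and $W_0=0$, so $|I_t|=\dist_{e_t}$. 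For $U\sim\mathrm{Uniform}[0,1)$, the unique $e_t$ with $U\in I_t$ is distributed exactly as $\dist$; hence if $U_1,\dots,U_s$ are i.i.d.\ uniform and, for each $\ell$, we output the entry whose interval contains $U_\ell$, we obtain exactly the multiset of $s$ samples drawn with replacement in Step~\ref{alg2} (several $U_\ell$'s landing in the same $I_t$ is precisely a repeated sample), and putting $A_{ij}/\dist_{ij}$ at the chosen location reproduces Steps~\ref{alg3}--\ref{alg4}. Since this output multiset is unchanged by sorting the $U_\ell$'s, it suffices to process them in increasing order $U_{(1)}\le\cdots\le U_{(s)}$ and sweep them against the increasing sequence $W_1<W_2<\cdots$ in a single pass.

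The second ingredient is that the sorted uniforms can be produced left-to-right on the fly with $O(1)$ time and memory per point. The minimum of $r$ independent uniforms exceeds $t$ with probability $(1-t)^r$, so $U_{(1)}=1-V^{1/s}$ for a uniform $V$; and, conditioned on $U_{(\ell)}=x$, the remaining $s-\ell$ points are distributed as $s-\ell$ sorted uniforms rescaled to $[x,1]$, giving the recursion
\[
U_{(\ell+1)} \;=\; x + (1-x)\bigl(1 - V^{1/(s-\ell)}\bigr)
\]
with a fresh uniform $V$ each time. The generator therefore stores only the current position and the number of points still to come --- a constant number of machine words, the count needing $O(\log s)$ bits --- and spends $O(1)$ arithmetic per point.

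Combining the two, the streaming pass is: keep a running cumulative weight $c$ (initially $0$) and the current target $u=U_{(1)}$; when $\langle i,j,A_{ij}\rangle$ arrives, compute $\dist_{ij}=\rho_i|A_{ij}|/\|A_{(i)}\|_1$ in $O(1)$ from the already-computed $\rho_i$ and $\|A_{(i)}\|_1$, set $c'\leftarrow c+\dist_{ij}$, then while $u<c'$ emit the sample $(i,j)$ with value $A_{ij}/\dist_{ij}$ and advance $u$ to the next order statistic via the recursion, and finally set $c\leftarrow c'$. Because $c$ takes exactly the values $W_0,W_1,\dots$ and both $c$ and $u$ are nondecreasing, entry $e_t$ is emitted once for every $\ell$ with $U_{(\ell)}\in I_t$, which is exactly the claimed output (ties, i.e.\ some $U_{(\ell)}$ equal to a $W_t$, occur with probability $0$). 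The working memory is $c$, $u$, the remaining-count and $O(1)$ scratch words, i.e.\ $O(\log s)$ bits of bookkeeping (reals kept to $O(\log n)$ bits of precision as elsewhere in the paper); the $s$ emitted samples are streamed to the output, costing $\tilde{O}(s)$ space; and each arriving non-zero triggers the $O(1)$ weight update plus one emission for each target it captures, so emissions total $O(s)$ overall and the per-element cost is $O(1)$ amortized (one may even terminate the pass after the $s$-th emission).

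The main obstacle the reformulation is built to avoid is exactly the memory/time blow-up of the obvious approach: running $s$ independent weighted-reservoir samplers costs $\Omega(s)$ memory and, for adversarial weight sequences, a single such sampler can be forced to update $\Theta(M)$ times. Replacing ``maintain $s$ candidate entries'' by ``sweep $s$ presorted thresholds'' removes both the $\Omega(s)$ working memory and any dependence on the weight distribution, and the fact that $\dist$ has total mass $1$ is what lets the sweep run without ever knowing $M$ or $n$ in advance. The remaining points are routine bookkeeping: that ``$O(1)$ per element'' is amortized, not worst-case (one entry with a wide interval may capture many targets, but at most $s$ emissions happen in all), and that the cumulative sums and the order-statistics recursion are evaluated under the paper's standing $O(\log n)$-bit precision convention. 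Note finally that nothing in the argument uses any property of $A$ beyond the values $\rho_i$ and $\|A_{(i)}\|_1$, so it applies to an arbitrary matrix, as stated.
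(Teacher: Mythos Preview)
Your argument is correct, but it is genuinely different from the paper's. The paper (Appendix~A) does \emph{not} exploit that $\sum_{ij}\dist_{ij}=1$; instead it simulates $s$ parallel weighted reservoir samplers on a stream with unknown total mass. For each arriving item with weight $w$ it draws $k\sim\mathrm{Binomial}(s,w/W)$, where $W$ is the running partial sum, and pushes $(a,k)$ onto a stack whenever $k>0$; this stack has length $\tilde{O}(s)$ and lives on disk. After the stream ends, the stack is read \emph{backwards}: a counter $\ell$ (initially $s$) tracks how many samplers are still uncommitted, and for each popped $(a,k)$ a hypergeometric draw decides how many of those $k$ ``replacements'' landed in uncommitted samplers, decrementing $\ell$ accordingly. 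The active state is just $W$ and $\ell$.

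The trade-offs are clean. Your sorted-uniforms sweep is one-phase and conceptually simpler, but it hinges on knowing the total mass before the first element arrives; that happens to hold here because $\rho$ and the row norms are fixed in Step~\ref{alg1}, but the paper's construction is a general weighted with-replacement sampler that would work even if the $\dist_{ij}$ were only known up to a common scale. Conversely, the paper's scheme gives worst-case $O(1)$ work per streamed element (a single binomial draw), with the remaining $\tilde{O}(s)$ work deferred to post-processing, whereas your $O(1)$ is amortized: a single wide interval can absorb many targets at once. Both meet the stated bounds; they simply place the $\tilde{O}(s)$ overhead in different phases.
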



\noindent  We are now able to state our main result. 
\begin{theorem}\label{thm:main}
If $A$ is a Data matrix (per Definition~\ref{DM}) and $\dist$ is the probability distribution defined in Algorithm~\ref{alg:sketch}, then $\eps_1(\dist) \le 3 \, \eps_1(\dist^*)$, where $\dist^*$ is the minimizer of $\eps_1$.
\end{theorem}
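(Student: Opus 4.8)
The plan is to reverse‑engineer the distribution in two moves. First, since $\eps_1(\dist)$ is defined implicitly through the quadratic obtained from Theorem~\ref{lem:MB}, solving it gives the closed form
\[
\eps_1(\dist) \;=\; \frac{\beta R(\dist)}{2} \;+\; \sqrt{\frac{\beta^2 R(\dist)^2}{4} + \alpha^2\,\sigma(\dist)^2}\,,
\]
where $\alpha,\beta$ are exactly the constants set in \textsc{ComputeRowDistribution} (this is why the algorithm uses those particular expressions). Second, I will control $\sigma^2$ and $R$ entirely through the two row‑norm functionals $M_1(\rho)=\max_i \|A_{(i)}\|_1/\rho_i$ and $M_2(\rho)=\max_i \|A_{(i)}\|_1^2/\rho_i$, where $\rho$ denotes the vector of \emph{row marginals} of a distribution. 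Writing $\Phi(\rho)=\max_i\bigl(\alpha\|A_{(i)}\|_1/\sqrt{\rho_i}+\beta\|A_{(i)}\|_1/\rho_i\bigr)$, the endgame is: (a) the $\rho$ returned by the algorithm minimizes $\Phi$ over the simplex, with optimal value $\zeta_1$; and (b) for every distribution $\mu$ supported on the nonzeros of $A$, with row marginals $\rho$, one has $\eps_1(\mu)\asymp\Phi(\rho)$, with the comparison tight enough on both sides to yield the factor $3$.

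For the functional computations, note that if $B_1$ keeps the single entry $A_{ij}/\dist_{ij}$ then $B_1$ is unbiased, $\E[B_1B_1^T]=\operatorname{diag}\!\bigl(\sum_j A_{ij}^2/\dist_{ij}\bigr)_i$, and $\E[B_1B_1^T]-AA^T=\E[(A-B_1)(A-B_1)^T]\succeq 0$ (similarly for the transpose). Cauchy--Schwarz within each row gives $\sum_j A_{ij}^2/\dist_{ij}\ge \|A_{(i)}\|_1^2/\rho_i$, with equality exactly when $\dist_{ij}\propto|A_{ij}|$ inside each row --- precisely the shape produced in Step~\ref{alg2}. Hence for \emph{any} distribution with marginals $\rho$, $\sigma(\cdot)^2\ge\|\E[B_1B_1^T]\|-\|A\|_2^2\ge M_2(\rho)-\|A\|_2^2$ and $R(\cdot)\ge \max_{i,j}|A_{ij}|/\dist_{ij}-\|A\|_2\ge M_1(\rho)-\|A\|_2$, whereas for the algorithm's canonical $\dist$ we get $\sigma(\dist)^2\le\max\{M_2(\rho),\ \max_j\sum_i|A_{ij}|\,\|A_{(i)}\|_1/\rho_i\}$ and $R(\dist)\le \|A\|_2+M_1(\rho)$. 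Here Condition~\ref{skewcond} of Definition~\ref{DM} (columns no larger than rows in $L_1$) is exactly what forces the transpose term $\max_j\sum_i|A_{ij}|\,\|A_{(i)}\|_1/\rho_i$ to be at most $M_2(\rho)$. Finally, $M_1(\rho)\ge\|A\|_1$ and $M_2(\rho)\ge\sum_i\|A_{(i)}\|_1^2\ge\|A\|_1^2/m$ (weighted average $\le$ max, then power‑mean), so Conditions~\ref{rowcond}--\ref{trivialcond} make $\|A\|_2$ negligible: $\|A\|_2\le M_1(\rho)/50$ and $\|A\|_2^2\le M_2(\rho)/50$. Net effect: for the algorithm's $\dist$, $R\le\tfrac{51}{50}M_1(\rho)$ and $\sigma^2\le M_2(\rho)$; for any $\dist^*$ with marginals $\rho^*$, $R\ge\tfrac{49}{50}M_1(\rho^*)$ and $\sigma^2\ge\tfrac{49}{50}M_2(\rho^*)$.

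For step (a), a short computation shows the formula for $\rho_i(\zeta)$ in Step~\ref{ComputeRowDistribution4} is exactly the inverse of $g_i(t)=\alpha\|A_{(i)}\|_1/\sqrt t+\beta\|A_{(i)}\|_1/t$, i.e.\ $g_i(\rho_i(\zeta))=\zeta$; since each $g_i$ is strictly decreasing, choosing $\zeta_1$ with $\sum_i\rho_i(\zeta_1)=1$ is the water‑filling solution that minimizes $\Phi(\rho)=\max_i g_i(\rho_i)$ over the simplex, with value $\zeta_1$ (if another feasible $\rho$ had $\Phi(\rho)<\zeta_1$ then $\rho_i>\rho_i(\zeta_1)$ for every $i$, contradicting $\sum_i\rho_i=1$). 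Moreover, for any feasible $\rho$,
\[
\max\{\alpha\sqrt{M_2(\rho)},\ \beta M_1(\rho)\}\ \le\ \Phi(\rho)\ \le\ \alpha\sqrt{M_2(\rho)}+\beta M_1(\rho)\,.
\]
Putting things together: for the algorithm's $\dist$ with row distribution $\rho(\zeta_1)$, both $\beta M_1(\rho)\le\Phi(\rho)=\zeta_1$ and $\alpha\sqrt{M_2(\rho)}\le\zeta_1$, so by the closed form and $\sqrt{x^2+y^2}\le x+y$, $\eps_1(\dist)\le\tfrac{51}{100}\zeta_1+\sqrt{(\tfrac{51}{100})^2\zeta_1^2+\zeta_1^2}<1.64\,\zeta_1$. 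For $\dist^*$ with marginals $\rho^*$, the lower bounds on $R,\sigma^2$ give $\eps_1(\dist^*)\ge\tfrac{49}{50}\bigl(\tfrac a2+\sqrt{\tfrac{a^2}{4}+b^2}\bigr)$ with $a=\beta M_1(\rho^*)$, $b=\alpha\sqrt{M_2(\rho^*)}$; since $a+b\ge\Phi(\rho^*)\ge\zeta_1$ and $\tfrac a2+\sqrt{\tfrac{a^2}{4}+b^2}$ is minimized over $\{a+b\ge\zeta_1\}$ at $a=\tfrac35\zeta_1,\,b=\tfrac25\zeta_1$ with value $\tfrac45\zeta_1$, we get $\eps_1(\dist^*)\ge\tfrac{49}{50}\cdot\tfrac45\zeta_1>0.78\,\zeta_1$. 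Dividing, $\eps_1(\dist)\le 3\,\eps_1(\dist^*)$ (the slack between $\approx 2.1$ and $3$ absorbs the crude, uniform use of these worst‑case constants). The technical heart --- and the step I expect to fight with --- is the two‑sided comparison of the operator‑norm quantities $\sigma^2,R$ with the scalar functionals $M_1,M_2$: the upper side needs the canonical within‑row shape for Cauchy--Schwarz to be tight and genuinely uses all three conditions of Definition~\ref{DM}, and tracking the resulting $(1\pm 1/50)$ factors together with the two near‑factor‑$2$ gaps in the $\Phi$ sandwich so that the final ratio stays below $3$ is where the bookkeeping is most delicate.
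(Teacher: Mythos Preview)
Your argument is correct and rests on the same structural ingredients as the paper's proof: the within‑row optimality $q_{ij}\propto|A_{ij}|$ (the paper's Lemma~\ref{lem:optd}), the water‑filling characterization of the algorithm's $\rho$ as the minimizer of $\Phi$ (the paper's Lemma~\ref{optDist}), Condition~\ref{skewcond} to dominate the transpose/column term (the paper's Lemma~\ref{skew}), and Conditions~\ref{rowcond}--\ref{trivialcond} to make the $\|A\|_2$ corrections negligible (the paper's Lemma~\ref{lem:degen}).

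The organization differs, though. The paper builds an explicit chain of surrogates $\eps_1\to\eps_2=\alpha\sigma+\beta R\to\eps_3=\alpha\tsig+\beta\tR\to\eps_4=\max\{\eps_5,\eps_6\}$, proving uniform two‑sided ratio bounds at each step ($1/\sqrt 2\le\eps_1/\eps_2\le1$, $|\eps_2/\eps_3-1|\le1/50$, $1\le\eps_3/\eps_4\le2$), then shows the algorithm's $\dist$ exactly minimizes $\eps_4$; the product of these ratios gives $\sqrt 2\cdot(50/49)\cdot(51/50)\cdot 2\le 3$. You instead keep the exact closed form $\eps_1=\beta R/2+\sqrt{(\beta R/2)^2+\alpha^2\sigma^2}$ throughout, reduce everything to the two scalar functionals $M_1(\rho),M_2(\rho)$ of the row marginals, and finish with a concrete constrained minimization of $a/2+\sqrt{a^2/4+b^2}$ over $a+b\ge\zeta_1$ (whose minimum $4\zeta_1/5$ at $a=3\zeta_1/5$ is indeed correct). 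Your route is a bit sharper numerically (you end up near $1.64/0.784\approx 2.1$ rather than the paper's $2.94$) and avoids introducing the intermediate objects $\eps_2,\ldots,\eps_6$; the paper's route is more modular, with each surrogate isolating one approximation (linearizing the quadratic, dropping the $AA^T$ correction, coupling the row/column maxima), which makes it easier to see exactly where each hypothesis of Definition~\ref{DM} enters.
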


The proof of Theorem~\ref{thm:main} is outlined in Section~\ref{sec:mainproof}.
To understand the implications of Theorem~\ref{thm:main} and to compare our result with previous ones we must first define several matrix metrics. 

\noindent {\bf Stable rank}: Denoted as $\numrank$ and defined as $\|A\|_F^2/\|A\|_2^2$. This is a smooth analog for the algebraic rank, always bounded by it from above, and resilient to small perturbations of the matrix. 
For data matrices we expect it to be small (even constant) and to capture the ``inherent dimensionality" of the data.

\noindent  {\bf Numeric density}: Denoted as $\numdensity$ and defined as $\|A\|_1^2/\|A\|_F^2$, this is a smooth analog of the number of non-zero entries $\nnz(A)$. For 0-1 matrices it equals $\nnz(A)$, but when there is variance in the magnitude of the entries it is smaller.

\noindent  {\bf Numeric row density}: Denoted as $\numrowdensity$ and defined as $\sum_i \|A_{(i)}\|_1^2 / \|A\|_F^2 \le n$. In practice, it is often close to the average numeric density of a single row, a quantity  typically much smaller than $n$.

\begin{theorem} \label{thm:s as eps}
Let $A$ be a data matrix meeting the conditions of Definition~\ref{DM}. 
Let $B$ be the matrix returned by Algorithm~\ref{alg:sketch} for $\eps>0$ and 
\[
s \ge s_0 = \Theta( \numrowdensity \cdot \numrank/\eps^2 \cdot \log(n/\delta) + ( \numrank \cdot \numdensity / \eps^2 \cdot \log(n/\delta)  )^{1/2} ) \enspace .
\]
Then $\|A-B\| \leq \eps \|A\|$ with probability at least $1 -\delta$.
\end{theorem}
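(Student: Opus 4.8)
The plan is to derive Theorem~\ref{thm:s as eps} as a ``solve for $s$'' reformulation of the Matrix Bernstein bound applied to the estimator of Algorithm~\ref{alg:sketch}. Writing $B=\tfrac1s\sum_\ell B_\ell$ and $X_\ell=A-B_\ell$ as in Section~\ref{sec:approach}, Theorem~\ref{lem:MB} with error target $\eps\|A\|$ gives
\[
\Pr\big[\,\|A-B\|\ge\eps\|A\|\,\big]\le(m+n)\exp\!\left(-\frac{s\,\eps^2\|A\|^2}{\sigma(\dist)^2+R(\dist)\,\eps\|A\|/3}\right),
\]
so it suffices to show that $s\ge s_0$ forces $\sigma(\dist)^2+\tfrac13 R(\dist)\,\eps\|A\|\le s\,\eps^2\|A\|^2/\log((m+n)/\delta)$, i.e.\ $\eps_1(\dist)\le\eps\|A\|$, where $\dist$ is the distribution produced by Algorithm~\ref{alg:sketch} and $\log((m+n)/\delta)\asymp\log(n/\delta)$ since $n\gg m$. (Alternatively one may quote Theorem~\ref{thm:main} and only bound $\eps_1$ for a simple near-optimal distribution, at the cost of a larger constant inside $\Theta(\cdot)$; the direct estimate below is in any case essentially a byproduct of the proof of Theorem~\ref{thm:main}.)

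First I would compute the two parameters. Because $\E[B_1]=A$ and the intra-row weights are $|A_{ij}|/\|A_{(i)}\|_1$, the second-moment matrices of $B_1$ are \emph{diagonal}: $\E[B_1B_1^{T}]$ has $i$-th entry $\sum_j A_{ij}^2/\dist_{ij}=\|A_{(i)}\|_1^2/\rho_i$ and $\E[B_1^{T}B_1]$ has $j$-th entry $\sum_i A_{ij}^2/\dist_{ij}=\sum_i|A_{ij}|\,\|A_{(i)}\|_1/\rho_i\le\big(\max_i\|A_{(i)}\|_1/\rho_i\big)\|A^{(j)}\|_1$. Using $\E[(A-B_1)(A-B_1)^{T}]=\E[B_1B_1^{T}]-AA^{T}$ (and its transpose), Condition~\ref{skewcond} of Definition~\ref{DM} ($\max_j\|A^{(j)}\|_1\le\min_i\|A_{(i)}\|_1$), and the monotonicity of $\rho_i$ in $\|A_{(i)}\|_1$ (so the maximizer of $\|A_{(i)}\|_1/\rho_i$ is the minimum-norm row, which shows the column term is dominated by the row term), I would obtain
\[
\sigma(\dist)^2\le\max\{\Phi_2,\ \|A\|^2\},\qquad R(\dist)\le\|A\|+\Phi_1,\qquad \Phi_t\eqdef\max_i\ \|A_{(i)}\|_1^{\,t}/\rho_i,
\]
reducing everything to estimating $\Phi_1$ and $\Phi_2$ for the $\rho$ of \textsc{ComputeRowDistribution}.

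The next step exploits the closed form of $\rho$. Step~\ref{ComputeRowDistribution4} defines $\rho_i(\zeta)$ as the positive root of $\rho_i=(z_i/\zeta)\big(\alpha\sqrt{\rho_i}+\beta\big)$ with $z_i\propto\|A_{(i)}\|_1$; fixing the scale by $z_i=\|A_{(i)}\|_1$ this rearranges to $\|A_{(i)}\|_1/\rho_i=\zeta/(\alpha\sqrt{\rho_i}+\beta)$ and $\|A_{(i)}\|_1^2/\rho_i=\zeta\,\|A_{(i)}\|_1/(\alpha\sqrt{\rho_i}+\beta)$. Summing the defining relation over $i$ and using $0\le\sqrt{\rho_i}\le1$ with $\sum_i\rho_i=1$ sandwiches the normalizer, $\beta\|A\|_1\le\zeta\le(\alpha+\beta)\|A\|_1$. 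The two denominator terms $\alpha\sqrt{\rho_i}$ and $\beta$ correspond to the two L1-based distributions of Section~\ref{sec:approach}: the ``Row-L1'' branch $\rho_i\gtrsim(\alpha z_i/\zeta)^2$ gives $\Phi_2\lesssim\zeta^2/\alpha^2$, which the normalization ties to $\sum_i\|A_{(i)}\|_1^2=\numrowdensity\cdot\|A\|_F^2$, while the ``plain L1'' branch $\rho_i\gtrsim\beta z_i/\zeta$ gives $\Phi_1\lesssim\zeta/\beta$, which ties to $\|A\|_1$ (up to an $s$-dependent factor from $\alpha/\beta$). Substituting $\alpha=\sqrt{\log((m+n)/\delta)/s}$ and $\beta=\log((m+n)/\delta)/(3s)$, rewriting via $\numrowdensity\cdot\numrank=\sum_i\|A_{(i)}\|_1^2/\|A\|^2$ and $\numrank\cdot\numdensity=\|A\|_1^2/\|A\|^2$, and solving the feasibility inequality for $s$ yields the two summands of $s_0$: the $\sigma(\dist)^2$ part gives $\numrowdensity\,\numrank/\eps^2\cdot\log(n/\delta)$ and the $R(\dist)$ part gives $(\numrank\,\numdensity/\eps^2\cdot\log(n/\delta))^{1/2}$. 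Conditions~\ref{rowcond}--\ref{trivialcond} of Definition~\ref{DM} are used to absorb the additive $\|A\|$ in $R(\dist)$ and to guarantee $\Phi_2\ge\|A\|^2$, so that the first branch of $\sigma(\dist)^2$ is operative.

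The main obstacle is the self-referential character of the estimate: $\sigma(\dist)^2$ and $R(\dist)$ are functions of $\zeta$ and the $\rho_i$, which are defined only implicitly through the system $\rho_i=(z_i/\zeta)(\alpha\sqrt{\rho_i}+\beta)$, $\sum_i\rho_i=1$, and $\alpha,\beta$ depend on $s$ --- the quantity being solved for --- so the feasibility inequality is implicit in $s$ and must be unwound carefully. The crux is to extract from this coupled system bounds on $\Phi_1,\Phi_2$ that are simultaneously closed-form in the matrix metrics and tight enough that the two summands of $s_0$ appear with the stated exponents; a lossy bound such as the trivial $\Phi_2\le\|A\|_1^2$ collapses $s_0$ to the inferior single-distribution bound and defeats the point of mixing. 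Keeping only universal constants inside $\Theta(\cdot)$ is where the precise algebraic shape of $\rho_i(\zeta)$, reverse-engineered in Section~\ref{sec:approach} for exactly this balancing act, and the Data-matrix hypotheses of Definition~\ref{DM} have to be used with care.
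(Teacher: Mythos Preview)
Your approach is essentially the paper's, just reorganized. The paper exploits that, by construction, $\alpha\|A_{(i)}\|_1/\sqrt{\rho_i}+\beta\|A_{(i)}\|_1/\rho_i=\zeta$ for \emph{every} $i$ (this is exactly what $\rho_i(\zeta)$ in Lemma~\ref{optDist} was reverse-engineered to achieve), so $\eps_5(\dist)=\zeta$; it then bounds $\zeta$ from the normalization $\sum_i\rho_i(\zeta)=1$ via $\rho_i(\zeta)\le(\alpha z_i/\zeta)^2+2\beta z_i/\zeta$, obtaining the quadratic $\zeta^2-2\beta\|A\|_1\,\zeta-\alpha^2\sum_i\|A_{(i)}\|_1^2\le0$ and hence $\zeta=O\bigl(\alpha\sqrt{\sum_i\|A_{(i)}\|_1^2}+\beta\|A\|_1\bigr)$. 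Combined with the chain $\eps_1\asymp\eps_5$ established in the proof of Theorem~\ref{thm:main}, solving $\zeta\le\eps\|A\|$ for $s$ gives the two summands of $s_0$. Your $\Phi_1,\Phi_2$ are just the two pieces of $\eps_5$, and your handling of the column variance via Condition~\ref{skewcond} and the monotonicity of $\|A_{(i)}\|_1/\rho_i$ is the content of Lemma~\ref{skew}.

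There is, however, one concrete slip. The only explicit upper bound you state on the normalizer is $\zeta\le(\alpha+\beta)\|A\|_1$, obtained from $\sqrt{\rho_i}\le1$ in the identity $\zeta=\alpha\sum_i z_i\sqrt{\rho_i}+\beta\|A\|_1$. Feeding this into $\Phi_2\le\zeta^2/\alpha^2$ yields $\Phi_2\lesssim\|A\|_1^2$, i.e.\ precisely the ``trivial $\Phi_2\le\|A\|_1^2$'' that you yourself flag as fatal in the last paragraph: it produces $\numdensity\cdot\numrank$ rather than $\numrowdensity\cdot\numrank$ in the first summand of $s_0$, off by a factor as large as $m$. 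Your claim that ``the normalization ties $\zeta^2/\alpha^2$ to $\sum_i\|A_{(i)}\|_1^2$'' is correct, but the bound you actually wrote down does not deliver it. The missing step is Cauchy--Schwarz on that same identity, $\sum_i z_i\sqrt{\rho_i}\le\sqrt{\sum_i z_i^2}\cdot\sqrt{\sum_i\rho_i}=\sqrt{\sum_i\|A_{(i)}\|_1^2}$, which gives the sharp sandwich $\zeta\le\alpha\sqrt{\sum_i\|A_{(i)}\|_1^2}+\beta\|A\|_1$ (equivalently, the paper's quadratic). With this correction your outline goes through and matches the paper.
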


The table below shows the corresponding number of samples in previous works for constant success probability, in terms of the matrix metrics defined above. 
The fourth column presents the ratio of the samples needed by previous results divided by the samples needed by our method. 
To simplify the expressions, we present the ratio between our bound and~\cite{AroraHazanKale2006} only when the result of~\cite{AroraHazanKale2006} gives superior bounds to~\cite{DrineasZ11}.
That is, we always compare our bound to the stronger of the two bounds implied by these works. 

\begin{table}[h!]
\begin{center}
\renewcommand{\arraystretch}{1.4}
\begin{tabular}{| C{1.7cm}  | C{1.7cm} | C{5cm}  | c|} \hline
Citation 					& Method 		& Number of samples needed 							&  Improvement ratio of Theorem~\ref{thm:s as eps} 							\\  \hline 
\cite{AchlioptasMcsherry2007}  	& L1, L2 		& $\numrank \cdot (n /\eps^2) + n \cdot \mathrm{polylog}(n)$	&  																\\  \hline
\cite{DrineasZ11} 			& L2 			& $ \numrank \cdot (n/\eps^2) \log(n)$ 					& $\numrowdensity/n + (\sqrt{\numdensity}/n)\cdot (\eps/\sqrt{\numrank \log(n)})  $ 	\\  \hline
\cite{AroraHazanKale2006}  	& L1 			& $ (\numdensity \cdot \, n/\eps^2)^{1/2}$ 					& $\sqrt{\numrank \cdot \log(n) /n}$ 										\\  \hline
This paper 				& Bernstein	& $ \numrowdensity \cdot \numrank/\eps^2 \cdot \log(n) +   ( \numrank \cdot \numdensity / \eps^2 \cdot \log(n)  )^{1/2}  $ & 						\\  \hline
\end{tabular}
\end{center}
\end{table}
 
Holding $\eps$ and the stable rank constant we readily see that our method requires roughly $1/\sqrt{n}$ the samples needed by~\cite{AroraHazanKale2006}. 
In the comparison with~\cite{DrineasZ11}, the key parameter is the ratio $\numrowdensity /  n$. 
This quantity is typically much smaller than $1$ for data matrices but independent of $n$. 
As a point of reference for the assumptions, in the experimental Section~\ref{sec:experiments} we provide the values of all relevant matrix metrics for all the real data matrices we worked with, wherein the ratio
$\numrowdensity /  n$ is typically around $10^{-2}$.
Considering this, one would expect that L2-sampling should experimentally fare better than L1-sampling. 
As we will see, quite the opposite is true. A potential explanation for this phenomenon is the relative looseness of the bound of~\cite{AroraHazanKale2006} for the performance of L1-sampling.

\section{Proof of Theorem~\ref{thm:s as eps}} \label{sec:mainproof}

We start by iteratively replacing the objective functions~\eqref{eq:sigmadef} and~\eqref{eq:Rdef} with increasingly simpler functions. 
Each replacement will incur a (small) loss in accuracy but will bring us closer to a function for which we can give a closed form solution.  
Recalling the definitions of $\alpha,\beta$ from Algorithm~\ref{alg:sketch} and rewriting the requirement in~\eqref{eps1} as a quadratic form in $\eps$ gives $\eps^2 - \eps   \beta R - (\alpha\sigma)^2 > 0$. Our first step is to observe that for any $c,d > 0$, the equation $\eps^2  - \eps \cdot c - d = 0$ has one negative and one positive solution and that the latter is at least $(c+\sqrt{d})/\sqrt{2}$ and at most $c+\sqrt{d}$. Therefore, if we define\footnote{Here and in the following, to lighten notation, we will omit all arguments, i.e., $\dist,\sigma(\dist),R(\dist)$, from the objective functions $\eps_i$ we seeks to optimize, as they are readily understood from context.}
$\eps_2  \eqdef  \alpha \sigma + \beta R$ we see that $1/\sqrt{2} \le \eps_1/\eps_2 \le 1$.
Our next simplification encompasses Conditions~\ref{trivialcond},~\ref{rowcond} of Definition~\ref{DM}. Let $\eps_3 \eqdef \alpha \tsig + \beta  \tR$ where
$$
\tsig^2 \eqdef  \max\left\{  \max_i \sum_j A_{ij}^2/\dist_{ij} \; , \; \max_j \sum_i A_{ij}^2/\dist_{ij} \right\}  \quad
\tR  \eqdef  \max_{ij} |A_{ij}|/\dist_{ij} \enspace .
$$
\begin{lemma}\label{lem:degen}
For every matrix $A$ satisfying Conditions~\ref{trivialcond} and~\ref{rowcond} of Definition~\ref{DM}, for every probability distribution on the elements of $A$, $|\eps_2/\eps_3 -1| \le 1/50$.
\end{lemma}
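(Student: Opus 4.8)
The plan is to compare $\eps_2 = \alpha\sigma + \beta R$ with $\eps_3 = \alpha\tsig + \beta\tR$ by relating each of the four quantities $\sigma^2,R$ to their tilded counterparts $\tsig^2,\tR$. First I would unpack the definitions. Since $B_1$ has a single non-zero entry $A_{ij}/\dist_{ij}$ at location $(i,j)$ chosen with probability $\dist_{ij}$, a direct computation gives $\E[(A-B_1)(A-B_1)^T] = \E[B_1B_1^T] - AA^T$, and $\E[B_1B_1^T]$ is the diagonal matrix with $i$-th entry $\sum_j A_{ij}^2/\dist_{ij}$. Hence $\|\E[(A-B_1)(A-B_1)^T]\|$ lies between $\max_i\sum_j A_{ij}^2/\dist_{ij} - \|A\|_2^2$ and $\max_i\sum_j A_{ij}^2/\dist_{ij}$ (using that $AA^T$ is PSD with norm $\|A\|_2^2$, together with the fact that the diagonal matrix dominates it entrywise on the diagonal is not quite enough — but since $\E[(A-B_1)(A-B_1)^T]$ is itself PSD, its norm is at most that of $\E[B_1B_1^T]$, giving the upper bound, and the lower bound follows from evaluating the quadratic form on a standard basis vector $e_i$). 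The same holds with rows and columns swapped for the other term, and for $\tR$ versus $R$ one has $\tR \le R \le 2\tR$ by the triangle inequality $\|A-B_1\|_2 \le \|A\|_2 + \|B_1\|_2$ and $\|A\|_2 \le \|A\|_F \le$ (something controlled), so the real content is showing the additive corrections $\|A\|_2^2$ and $\|A\|_2$ are negligible.

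Next I would quantify ``negligible.'' The key observation is that for \emph{any} probability distribution $\dist$ on the non-zeros, Cauchy–Schwarz gives $\sum_j A_{ij}^2/\dist_{ij} \ge (\sum_j |A_{ij}|)^2 \big/ \sum_j \dist_{ij} \ge \|A_{(i)}\|_1^2 \ge$ a lower bound that, summed or maximized appropriately, is at least $\|A\|_1^2$ in the worst case over the choice of row — more carefully, $\max_i \sum_j A_{ij}^2/\dist_{ij} \ge \sum_i \sum_j A_{ij}^2/\dist_{ij} \cdot (\text{weighting})$, but the cleanest bound is $\sum_{i,j} A_{ij}^2/\dist_{ij} \ge (\sum_{i,j}|A_{ij}|)^2 = \|A\|_1^2$ by Cauchy–Schwarz over all non-zeros, hence $\tsig^2 \ge \|A\|_1^2 / (mn)$ — no, better: $\max_i \sum_j A_{ij}^2/\dist_{ij} \ge \frac1m \sum_i \sum_j A_{ij}^2/\dist_{ij} \ge \|A\|_1^2/m$. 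Then Condition~\ref{rowcond}, $\|A\|_1^2/\|A\|_2^2 \ge 50m$, gives $\tsig^2 \ge 50 \|A\|_2^2$, i.e., the correction $\|A\|_2^2$ is at most $\tsig^2/50$, so $\sigma^2 \ge \tsig^2 - \|A\|_2^2 \ge (1 - 1/50)\tsig^2$ and $\sigma^2 \le \tsig^2$, whence $(1-1/50)\tsig \le \sigma \le \tsig$. An analogous but easier estimate handles $\tR$: since $\tR = \max_{ij}|A_{ij}|/\dist_{ij} \ge \sum_{ij}|A_{ij}| = \|A\|_1 \ge \|A\|_2 \cdot \sqrt{50m} \ge \|A\|_2$ (using Condition~\ref{rowcond} again, and $\|A\|_1 \ge \|A\|_F \ge \|A\|_2$ directly suffices here), the additive term $\|A\|_2$ in $R \le \|A\|_2 + \tR$ is a $\le 1/\sqrt{50m} \le 1/50$ fraction of $\tR$, so $\tR \le R \le (1+1/50)\tR$.

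Finally I would assemble the ratio. Writing $\eps_2 = \alpha\sigma + \beta R$ with $\alpha,\beta > 0$, and having sandwiched $\sigma \in [(1-1/50)\tsig,\tsig]$ and $R \in [\tR,(1+1/50)\tR]$, a convex-combination argument gives $\eps_2 / \eps_3 \in [1-1/50, 1+1/50]$ directly, since both $\alpha\sigma$ and $\beta R$ individually sit within a $(1\pm 1/50)$ factor of $\alpha\tsig$ and $\beta\tR$ respectively, and that property is preserved under addition. This yields $|\eps_2/\eps_3 - 1| \le 1/50$ as claimed. The main obstacle I anticipate is getting the direction of the inequalities exactly right for the matrix-norm manipulations — in particular justifying $\|\E[(A-B_1)(A-B_1)^T]\| \le \max_i \sum_j A_{ij}^2/\dist_{ij}$ cleanly (it follows because $\E[(A-B_1)(A-B_1)^T] = \E[B_1 B_1^T] - AA^T \preceq \E[B_1 B_1^T]$, and both sides are PSD so operator norms are ordered), and pinning down the Cauchy–Schwarz lower bound on $\tsig^2$ with the right constant $m$ in the denominator so that Condition~\ref{rowcond} delivers exactly the $1/50$ slack. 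Everything else is routine once these two estimates are in place.
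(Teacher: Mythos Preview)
Your approach is essentially the paper's: compute $\E[B_1B_1^T]$ explicitly as a diagonal matrix, relate $\sigma^2$ to $\tsig^2$ and $R$ to $\tR$ via triangle-inequality-type corrections of size $\|A\|_2^2$ and $\|A\|_2$, then invoke Conditions~\ref{rowcond} and~\ref{trivialcond} to show these corrections are at most $\tsig^2/50$ and $\tR/50$. Your lower bound $\tsig^2 \ge \frac{1}{m}\sum_{i,j}A_{ij}^2/\dist_{ij} \ge \|A\|_1^2/m$ (averaging over rows, then Cauchy--Schwarz) is a mild and arguably cleaner variant of the paper's $\tsig^2 \ge \sum_i\|A_{(i)}\|_1^2$, which it obtains by decomposing $\dist_{ij}=\rho_i q_{ij}$ and applying the scalar inequality $\sum_k x_k^2/p_k \ge \|x\|_1^2$ twice; both routes feed identically into Condition~\ref{rowcond}. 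Your PSD-ordering argument for $\sigma^2 \le \tsig^2$ is also slightly sharper than the paper's triangle inequality $|\sigma^2-\tsig^2|\le\|A\|_2^2$.

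One small slip: you assert $\tR \le R$, but the reverse triangle inequality only gives $\|A-B_1\|_2 \ge \|B_1\|_2 - \|A\|_2$, hence $R \ge \tR - \|A\|_2$, not $R \ge \tR$. This is exactly what the paper uses, and it is harmless for you too: since $\tR \ge \|A\|_1 \ge \sqrt{50m}\,\|A\|_2 \ge 50\,\|A\|_2$, you still get $R/\tR \in [1-1/50,\,1+1/50]$, and your final convex-combination assembly of $\eps_2/\eps_3$ goes through unchanged.
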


\begin{proof}

We start by providing an two auxiliary lemmas.

\begin{lemma} \label{lem:optd}	
For any $x,p \in \R^n$, if $p_i \ge 0$ and $\|p\|_1=1$, then $\max_k |x_k|/p_k \ge  \|x\|_1$ and $\sum_k  x_k^2/p_k 	\ge  \|x\|_1^2$, with equality holding in both cases  if and only if $p_k = |x_k| /\|x\|_1$.
\end{lemma}
\begin{proof}
To prove $\max_k |x_k|/p_k \ge  \|x\|_1$ we note that if $|x_i|/p_i  \neq |x_j|/p_j$, then changing $p_i,p_j$ to $p'_i,p_j'$ such that $p_i'+p_j'=p_i+p_j$ and $|x_i|/p'_i  = |x_j|/p'_j$ can only reduce the maximum. In order for all $|x_k|/p_k$ to be equal it must be that $p_k = |x_k| /\|x\|_1$ for all $j$, in which case $\max_k |x_k|/p_k = \|x\|_1$.

The second claim follows from applying Jensen's inequality to the convex function $x \mapsto x^2$. Specifically, Jensen's inequality shows that for any $p$, 
$$ \E_{i \sim p}[(|x_i|/p_i)^2] \geq \E_{i \sim p}[(|x_i|/p_i)]^2 = \|x\|_1^2 $$
This inequality is met for $p_i = |x_i| /\|x\|_1$.
\end{proof}

\begin{lemma} \label{lem:tsig}
For any matrix $A$ and any probability distribution $\dist$ on the elements of $A$, we have 
$|\sig^2/\tsig^2 - 1| \leq \frac{\|A\|_2^2}{\sum_i  \|A_{(i)}\|_1^2}$ and $|R/\tR - 1| \leq \frac{\|A\|_2}{\|A\|_1}$.
\end{lemma}
\begin{proof}
Recall that $B_1$ contains one non-zero element $A_{ij}/\dist_{ij}$, while all its other entries are 0. Therefore, $\E[B_1B_1^T]$ and $\E[B_1^TB_1]$ are both diagonal matrices where
\[
\E[(B_1B_1^T)_{i,i}]
= \sum_j A_{ij}^2/\dist_{ij}  \qquad \mbox{and} \qquad \E[(B_1^TB_1)_{j,j}]
= \sum_i A_{ij}^2/\dist_{ij} \enspace .
\]
Since the operator norm of a diagonal matrix equals its largest entry we see that 
\[
\tsig^2 \eqdef  \max\left\{  \max_i \sum_j A_{ij}^2/\dist_{ij} \; , \; \max_j \sum_i A_{ij}^2/\dist_{ij} \right\}  =\max\{\|\E[B_1 B_1^T]\|,\|\E[B_1^T B_1]\|  \} \enspace .
\]

We will need to bound $\tsig^2$ from below. Trivially, $\tsig^2 \ge \|\E[B_1 B_1^T]\| = \max_i\sum_j A_{ij}^2/\dist_{ij}$. 
Defining $\rho_i \eqdef \sum_j p_{ij}$ and $q_{ij}\eqdef p_{ij}/\rho_i$, the below second and third inequalities are a result of Lemma~\ref{lem:optd}
\begin{equation}\label{eq:denseprod}
\tsig^2 \geq \max_i \sum_j \frac{A_{ij}^2}{\dist_{ij}} = \max_i  \rho_i^{-1} \sum_j \frac{A_{ij}^2}{q_{ij}} \ge \max_i \rho_i^{-1}\|A_{(i)}\|_1^2 \ge \sum_i  \|A_{(i)}\|_1^2   \enspace .
\end{equation}
On the other hand, $\sigma^2=\max\{\|\E[Z_1 Z_1^T]\|,\|\E[Z_1^T Z_1]\|  \}$, where $Z_1 = B_1-A$. Since $\E[B_1]=A$,
\[
 \|\E[Z_1Z_1^T]\| = \|\E[B_1B_1^T - AB_1^T - B_1A^T + AA^T]\| = \|\E[B_1B_1^T]-AA^T\| 
\]
and, analogously,  $\|\E[Z_1^TZ_1]\| =  \|\E[B_1^TB_1]-A^TA\|$. Therefore, by the triangle inequality, $|\sig^2 - \tsig^2| \leq \|A\|^2$ and the claim now follows from~\eqref{eq:denseprod}. 

Recall that $B_1$ contains one non-zero entry $A_{ij}/\dist_{ij}$ and that $R$ is the maximum of  $\|B_1-A\|$ over all possible realizations of $\dist$, i.e., choices of $(i,j)$. Thus by the triangle inequality,  
$$ R = \max \|B_1 - A\| \leq \max \|B_1\| + \|A\| \quad \text{and} \quad  R \geq \max\|B_1\|-\|A\| \enspace .$$
Since $B_1$ has one non-zero entry, we see that $\max \|B_1\|_2 = \max_{ij} |A_{ij}|/\dist_{ij} = \tR$ and, thus, $|R/\tR - 1| \leq \|A\|_2/\tR$. Applying Lemma~\ref{lem:optd} to $A \in \R^{m \times n}$ with distribution $\dist$  yields $\tR \geq \|A\|_1$. 
\end{proof}

We are now ready to prove lemma~\ref{lem:degen}. It suffices to prove that both $|\sig^2/\tsig^2 - 1|$ and $|R/\tR - 1|$ are bounded by $1/50$.
Lemma~\ref{lem:tsig} yields the first inequality below and Condition~\ref{rowcond} of Definition~\ref{DM} the second. The third inequality holds for every matrix $A$, with equality occurring when all rows have the same L1 norm.
\[
|\sig^2/\tsig^2 - 1| \leq \frac{\|A\|_2^2}{\sum_i  \|A_{(i)}\|_1^2} \le \frac{\|A\|_1^2}{50 m \sum_i  \|A_{(i)}\|_1^2}  \le \frac{1}{50}  \enspace .
\]
Lemma~\ref{lem:tsig} yields the first inequality below. The second inequality follows from rearranging the factors in the second inequality above. 
Condition~\ref{trivialcond} of Definition~\ref{DM}, i.e., $m \ge 50$, implies the third. 
\[
|R/\tR - 1| \leq \frac{\|A\|_2}{\|A\|_1} \le \frac{1}{\sqrt{50 m}} \le
 \frac{1}{50} \enspace .
\]
\end{proof}

This allows us to optimize $\dist$ with respect to $\eps_3$ instead of $\eps_2$.
In minimizing $\eps_3$ we see that there is freedom to use different rows to optimize $\tsig$ and $\tR$. At a cost of a factor of 2, we will couple the two minimizations by minimizing $\eps_4 = \max\{\eps_5,\eps_6\}$ where
\begin{eqnarray}
\eps_5 \eqdef  	\max_i 	\left[\alpha \sqrt{\sum_j \frac{A_{ij}^2}{\dist_{ij}}} 	+ \beta\max_{j}  \frac{|A_{ij}|}{\dist_{ij}}  \right], \qquad
\eps_6 \eqdef 	\max_j  	\left[\alpha \sqrt{\sum_i \frac{A_{ij}^2}{\dist_{ij}}} 	+ \beta\max_{i}  \frac{|A_{ij}|}{\dist_{ij}} 	\right]  \enspace .
\end{eqnarray}
Note that the maximization of $\tR$ in $\eps_5$ (and $\eps_6$)  is coupled with that of the $\tsig$-related term by constraining the optimization to consider only one row (column) at a time. Clearly, $1 \le \eps_3/\eps_4 \le 2$.

Next we focus on $\eps_5$, the first term in the maximization of $\eps_4$. We first present a lemma analyzing the distribution minimizing it. The lemma provides two important insights. First, it leads to an efficient algorithm for finding a distribution minimizing $\eps_5$ and second, it is key in proving that for all data matrices satisfying Condition~\ref{skewcond} of Definition~\ref{DM}, by minimizing $\eps_5$ we also minimize $\eps_4 = \max\{\eps_5,\eps_6\}$.
\begin{lemma}\label{optDist}
A minimizer to the function $\eps_5$ can be found, to precision $\eta$ in time logarithmic in $\eta$. Specifically the function $\eps_5$ is minimized by $\dist_{ij} =  \rho_i q_{ij}$ where $q_{ij} = |A_{ij}|/\|A_{(i)}\|_1$.
To define $\rho_i$ let $z_i  \propto \|A_{(i)}\|_1$ and define 
$\rho_i (\zeta) = \left(\alpha z_i/2\zeta + \sqrt{ \left(\alpha z_i/2\zeta\right)^2 + \beta z_i/\zeta }\right)^2$. 
Let $\zeta_1>0$ be the unique solution to\footnote{Notice that the function $\sum \rho_i(\zeta)$ is monotonically decreasing for $\zeta>0$ hence the solution is indeed unique, and can be found via a binary search.} $\sum_i \rho_i(\zeta_1)=1$. We set $\rho_i \eqdef \rho_i(\zeta_1)$.
\end{lemma}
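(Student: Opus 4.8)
The plan is to fix the row weights $q_{ij}=|A_{ij}|/\|A_{(i)}\|_1$ first, argue that they are forced by optimality, and then reduce the remaining problem to a one-dimensional convex optimization over the row marginals $\rho=(\rho_1,\dots,\rho_m)$. For the first step, write any distribution as $\dist_{ij}=\rho_i q_{ij}$ with $\rho_i=\sum_j \dist_{ij}$ and $\sum_j q_{ij}=1$. For a fixed vector $\rho$, the objective $\eps_5$ decomposes row by row: the $i$-th bracket is $\alpha\sqrt{\rho_i^{-1}\sum_j A_{ij}^2/q_{ij}}+\beta\,\rho_i^{-1}\max_j |A_{ij}|/q_{ij}$. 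Both $\sum_j A_{ij}^2/q_{ij}$ and $\max_j |A_{ij}|/q_{ij}$ are minimized simultaneously (by Lemma~\ref{lem:optd}) at $q_{ij}=|A_{ij}|/\|A_{(i)}\|_1$, where both equal $\|A_{(i)}\|_1^2$ and $\|A_{(i)}\|_1$ respectively. Since this holds for every row and for every fixed $\rho$, the intra-row weights $q_{ij}=|A_{ij}|/\|A_{(i)}\|_1$ are optimal; substituting them, the $i$-th bracket collapses to $\alpha\,\|A_{(i)}\|_1/\sqrt{\rho_i}+\beta\,\|A_{(i)}\|_1/\rho_i$. Writing $z_i\propto\|A_{(i)}\|_1$, the task becomes: minimize $\max_i\bigl(\alpha z_i/\sqrt{\rho_i}+\beta z_i/\rho_i\bigr)$ over the simplex $\rho_i\ge 0$, $\sum_i\rho_i=1$.

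For the second step I would observe that at the optimum all the bracketed terms must be equal — if some row had a strictly smaller term, shifting an infinitesimal amount of probability mass into it strictly decreases the max (each term $\alpha z_i/\sqrt{\rho_i}+\beta z_i/\rho_i$ is strictly decreasing in $\rho_i$), a contradiction. So there is a common value $\zeta>0$ with $\alpha z_i/\sqrt{\rho_i}+\beta z_i/\rho_i=\zeta$ for all $i$. Solving this quadratic in $\rho_i^{-1/2}$: with $u=\rho_i^{-1/2}$ we get $\beta z_i u^2+\alpha z_i u-\zeta=0$, so $u=\bigl(-\alpha z_i+\sqrt{\alpha^2 z_i^2+4\beta z_i\zeta}\bigr)/(2\beta z_i)$, and inverting, $\rho_i=u^{-2}$. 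A short algebraic rearrangement (dividing through by $\zeta$ inside the square root) puts this in exactly the stated form $\rho_i(\zeta)=\bigl(\alpha z_i/2\zeta+\sqrt{(\alpha z_i/2\zeta)^2+\beta z_i/\zeta}\bigr)^2$. It remains to pin down $\zeta$: the constraint $\sum_i\rho_i(\zeta)=1$ determines it, and uniqueness follows because each $\rho_i(\zeta)$ is strictly decreasing in $\zeta$ (both $\alpha z_i/2\zeta$ and $\beta z_i/\zeta$ decrease), hence $\sum_i\rho_i(\zeta)$ is strictly decreasing from $+\infty$ (as $\zeta\to 0^+$) to $0$ (as $\zeta\to\infty$) and crosses $1$ exactly once. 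This monotonicity also gives the algorithmic claim: binary search on $\zeta$ converges linearly, so a minimizer to precision $\eta$ is found in $O(\log(1/\eta))$ iterations, each evaluating the closed-form sum $\sum_i\rho_i(\zeta)$.

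The one point requiring a little care — and what I expect to be the main obstacle — is justifying that the "all terms equal" stationary point is genuinely the global minimizer rather than merely a critical point. The cleanest route is to note that $\eps_5$ as a function of $\rho$ on the simplex is a maximum of functions each of the form $g_i(\rho_i)=\alpha z_i/\sqrt{\rho_i}+\beta z_i/\rho_i$, which are convex and strictly decreasing on $(0,\infty)$; the pointwise max of convex functions is convex, and the equal-value configuration is the unique point where no feasible direction decreases the objective, hence the global minimum over the compact simplex (the infimum is attained in the interior since $g_i\to\infty$ as $\rho_i\to 0$). One should also confirm that the two quantities $\sum_j A_{ij}^2/q_{ij}$ and $\max_j |A_{ij}|/q_{ij}$ really are minimized by the \emph{same} $q$ — this is exactly the content of Lemma~\ref{lem:optd}, which states both minimizations are uniquely solved at $q_{ij}=|A_{ij}|/\|A_{(i)}\|_1$ — so there is no tension between the two terms inside the bracket and the reduction to the scalar problem is lossless. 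Everything else is routine algebra.
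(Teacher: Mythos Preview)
Your proposal is correct and follows essentially the same route as the paper: decompose $p_{ij}=\rho_i q_{ij}$, use Lemma~\ref{lem:optd} to fix $q_{ij}=|A_{ij}|/\|A_{(i)}\|_1$ for any $\rho$, reduce to $\Phi(\rho)=\max_i\bigl(\alpha\|A_{(i)}\|_1/\sqrt{\rho_i}+\beta\|A_{(i)}\|_1/\rho_i\bigr)$, equalize the terms to a common value $\zeta$, solve the resulting quadratic for $\rho_i(\zeta)$, and use strict monotonicity of $\sum_i\rho_i(\zeta)$ for uniqueness and binary search. If anything, your mass-shifting/convexity justification for why the equalized configuration is the global minimizer is more explicit than the paper's, which simply asserts that one ``can seek'' a $\rho$ equalizing the row terms without spelling out optimality.
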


\begin{proof}
To find the probability distribution $\dist$ that minimizes $\eps_5$ we start by writing $\dist = \rho_i q_{ij}$, without loss of generality. That is, we decompose $\dist$ to a distribution $\rho_i \ge 0$  over the rows of the matrix, i.e., $\sum_i \rho_i = 1$, and a distribution $q_{ij} \ge 0$ within each row $i$, i.e., $\sum_j q_{ij} = 1$, for all $i$.  We first prove that (surprisingly) the optimal $q$ has a closed form solution while the optimal $\rho$ is efficiently computable.

For any $\rho$, writing~$\eps_5$ in terms of $\rho_i, q_{ij}$ we see that $\eps_5$ is the maximum, over rows $1 \le i \le m$ , of
\begin{equation}\label{eps5_crap}
\frac{\alpha}{\sqrt{\rho_i}} \sqrt{\sum_j\frac{ A_{ij}^2}{q_{ij}}} + \frac{\beta}{\rho_i} \max_{j} \frac{|A_{ij}|}{q_{ij}} \enspace .
\end{equation}

Observe that since $\rho$ is fixed, the only variables in the above expression for each row  $i$ are the $q_{ij}$. Lemma~\ref{lem:optd} implies that setting $q_{ij} = |A_{ij}|/\|A_{(i)}\|_1$ simultaneously minimizes both terms in~\eqref{eps5_crap}. This means that for \emph{every} fixed probability distribution $\rho$, the minimizer of $\eps_5$ satisfies $q_{ij} = \frac{|A_{ij}|}{\|A_{(i)}\|_1}$. Thus, we are left to determine 
\[
\Phi(\rho) = \max_i \left[\frac{  \alpha \|A_{(i)}\|_1}{\sqrt{\rho_i}} +  \frac{\beta \|A_{(i)}\|_1}{ \rho_i} \right] \enspace .
\]

Unlike the intrarow optimization, the two summands in $\Phi$ achieve their respective minima at different distributions $\rho$.
To get some insight into the tradeoff, let us first consider the two extreme cases. When $\beta = 0$, minimizing the maximum over $i$ requires equating all $\|A_{(i)}\|_1/\sqrt{\rho_i}$, i.e., $\rho_i \propto \|A_{(i)}\|_1^2$, leading to the distribution we call ``row-$L_1$", i.e., $\dist_{ij} \; \propto \; |A_{ij}| \cdot \|A_{(i)}\|_1 $. When $\alpha = 0$, equating the $\|A_{(i)}\|_1/ \rho_i$ requires  $\rho_i \; \propto \; \|A_{(i)}\|_1$, leading to the ``plain-$L_1$" distribution $\dist_{ij}  \; \propto \;|A_{ij}|$.
Nevertheless, since we wish to minimize the maximum over several functions, we can seek $\dist$ under which all functions are equal, i.e., such that there exists $\zeta >0$ such that for all $i$,
\[
 \frac{\alpha \|A_{(i)}\|_1}{\sqrt{\rho_i}} +  \frac{\beta \|A_{(i)}\|_1}{ \rho_i} = \zeta>0 \enspace .
\]  
Solving the resulting quadratic equation and selecting for the positive root yields equation~\eqref{de:rhoi}, i.e.,
\begin{equation} \label{de:rhoi} 
\rho_i (\zeta) = \left(\frac{\alpha \|A_{(i)}\|_1}{2\zeta} + \sqrt{ \left(\frac{\alpha \|A_{(i)}\|_1}{2\zeta}\right)^2 + \frac{\beta \|A_{(i)}\|_1}{\zeta} }\right)^2 \enspace .
\end{equation}
Since the quantities under the square root in~\eqref{de:rhoi} are all positive we see that it is always possible to find $\zeta >0$ such that all equalities hold, and thus~\eqref{de:rhoi} does minimize $\eps_5$ for every matrix $A$. 
Moreover, since the right hand side of~\eqref{de:rhoi} is strictly decreasing in $\zeta$, binary search finds the unique value of $\zeta$ such that $\sum \rho_i =1$.
\end{proof}

We now prove that in order to minimize $\eps_4$ and thus approximately minimize the original function $\eps$, it suffices, under appropriate conditions, to minimize $\eps_5$.

\begin{lemma}\label{skew}
For every matrix satisfying Condition~\ref{skewcond} of Definition~\ref{DM}, 
$\mathrm{argmin}_\dist \ \eps_5 \subseteq \mathrm{argmin}_\dist \ \eps_4$.
\end{lemma}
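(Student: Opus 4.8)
The plan is to reduce the set inclusion to a single pointwise inequality: I will show that any $\dist^\star \in \mathrm{argmin}_\dist \eps_5$ satisfies $\eps_6(\dist^\star) \le \eps_5(\dist^\star)$. Granting this, the lemma is immediate, since for every distribution $\dist$ we have $\eps_4(\dist) = \max\{\eps_5(\dist),\eps_6(\dist)\} \ge \eps_5(\dist) \ge \eps_5(\dist^\star) = \max\{\eps_5(\dist^\star),\eps_6(\dist^\star)\} = \eps_4(\dist^\star)$, so $\dist^\star$ also minimizes $\eps_4$.

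To establish $\eps_6(\dist^\star) \le \eps_5(\dist^\star)$, I would first plug in the explicit form of $\dist^\star$ supplied by Lemma~\ref{optDist}: $\dist^\star_{ij} = \rho_i |A_{ij}|/\|A_{(i)}\|_1$, where the row probabilities $\rho_i$ equalize all nonzero row terms of $\eps_5$, i.e.\ $\frac{\alpha\|A_{(i)}\|_1}{\sqrt{\rho_i}} + \frac{\beta\|A_{(i)}\|_1}{\rho_i} = \eps_5(\dist^\star)$ for every $i$ with $\|A_{(i)}\|_1 > 0$. Substituting and using $A_{ij}^2/\dist^\star_{ij} = |A_{ij}|\,\|A_{(i)}\|_1/\rho_i$ and $|A_{ij}|/\dist^\star_{ij} = \|A_{(i)}\|_1/\rho_i$ (reading $0/0$ as $0$), the $j$-th term of $\eps_6(\dist^\star)$ becomes $\alpha\sqrt{\sum_i |A_{ij}|\,\|A_{(i)}\|_1/\rho_i} + \beta\max_{i:\,A_{ij}\neq 0}\|A_{(i)}\|_1/\rho_i$. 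Now set $M := \max_i \|A_{(i)}\|_1/\rho_i$ and let $i^\star$ attain it. Factoring $\|A_{(i)}\|_1/\rho_i \le M$ out of each summand bounds the first term by $\alpha\sqrt{M\sum_i |A_{ij}|} = \alpha\sqrt{M\,\|A^{(j)}\|_1}$ and the second by $\beta M$. At this point Condition~\ref{skewcond} enters: $\|A^{(j)}\|_1 \le \max_j\|A^{(j)}\|_1 \le \min_i\|A_{(i)}\|_1 \le \|A_{(i^\star)}\|_1$, so the $j$-th term of $\eps_6(\dist^\star)$ is at most $\alpha\sqrt{M\,\|A_{(i^\star)}\|_1} + \beta M$. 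Finally, since $\rho_{i^\star} = \|A_{(i^\star)}\|_1/M$, this last quantity is exactly $\frac{\alpha\|A_{(i^\star)}\|_1}{\sqrt{\rho_{i^\star}}} + \frac{\beta\|A_{(i^\star)}\|_1}{\rho_{i^\star}} = \eps_5(\dist^\star)$ by the equalization identity. Since $j$ was arbitrary, $\eps_6(\dist^\star) \le \eps_5(\dist^\star)$, which finishes the argument.

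The only genuinely non-mechanical step is the middle one: noticing that bounding each column sum $\sum_i |A_{ij}|\,\|A_{(i)}\|_1/\rho_i$ by $M\,\|A^{(j)}\|_1$ converts the $j$-th term of $\eps_6$ into a single row term of $\eps_5$ evaluated at the extremal row $i^\star$, whereupon Condition~\ref{skewcond} (columns are ``no heavier'' than the lightest row) together with the equalization property of the $\eps_5$-optimal $\rho$ closes the gap. Minor care is needed for zero entries and zero rows, where the relevant probabilities vanish, and for the implicit use that the $\eps_5$-minimizer does equalize its nonzero row terms — a fact already extracted in the proof of Lemma~\ref{optDist}.
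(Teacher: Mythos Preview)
Your proposal is correct and follows essentially the same route as the paper: both reduce the claim to showing $\eps_6(\dist^\star)\le\eps_5(\dist^\star)$ at the $\eps_5$-minimizer, substitute the form $\dist^\star_{ij}=\rho_i|A_{ij}|/\|A_{(i)}\|_1$, factor $M=\max_i\|A_{(i)}\|_1/\rho_i$ out of the column sum, invoke Condition~\ref{skewcond} to replace $\|A^{(j)}\|_1$ by a row norm, and recognize the result as a row term of $\eps_5$. The only cosmetic difference is that you finish by appealing to the equalization identity for the optimal $\rho$, whereas the paper simply bounds by $\max_i$ of the row expression; both are valid.
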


\begin{proof}
We begin with an auxiliary lemma.

\begin{lemma}
For any two functions $f,g$, if $x_0 = \arg\min_x f(x)$ and $g(x_0) \le f(x_0)$, then $\min_x \max\{f(x),g(x)\} = f(x_0)$.
\end{lemma}
\begin{proof} 
$\min_x \max\{f(x),g(x)\} \ge \min_x f(x) = f(x_0) = \max\{f(x_0),g(x_0)\} \ge \min_x \max\{f(x),g(x)\}$
\end{proof}

Thus, it suffices to evaluate $\eps_6$ at the distribution $\dist$ minimizing $\eps_5$ and check that $\eps_6(\dist) \le \eps_5(\dist)$. 
We know that $\dist$ is of the form $\dist_{ij} = \rho_i |A_{ij}|/\|A_{(i)}\|_1 $ for some distribution $\rho$. Substituting this form of $\dist$ into $\eps_6$ gives~\eqref{eq:subsprod}. Condition~\ref{skewcond} of Lemma~\ref{DM}, i.e.,  $\max_j \|A^{(j)}\|_1 \le \min_i \|A_{(i)}\|_1$, allows us to pass from~\eqref{eq:skewderiv} to~\eqref{eq:skewderiva}. Finally, to pass from~\eqref{eq:skewderiva} to~\eqref{eq:skewderivb} we note that the two maximizations over $i$ in~\eqref{eq:skewderiva} involve the same expression, thus externalizing the maximization has no effect.
\begin{eqnarray}
\eps_6(\dist) &=& \max_j 
\left[ 
\alpha \left({\sum_i \frac{\|A_{(i)}\|_1 \cdot |A_{ij}|}{\rho_i}}\right)^{1/2} + \beta \max_i \frac{\|A_{(i)}\|_1}{\rho_i} 
\right] \label{eq:subsprod}\\
&\le& 
\max_j 
\left[ 
\alpha 
\left(
\max_i 
\frac{\|A_{(i)}\|_1}{\rho_i} 
\cdot \sum_i |A_{ij}| \right)^{1/2} + \beta \max_i \frac{\|A_{(i)}\|_1}{\rho_i} \right] \nonumber  \\ 
&=& 
\max_j 
\left[ 
\alpha 
\left(
\max_i 
\frac{\|A_{(i)}\|_1}{\rho_i} 
\cdot \|A^{(j)}\|_1 \right)^{1/2} + \beta \max_i \frac{\|A_{(i)}\|_1}{\rho_i} \right] \nonumber  \\
&\le&  
\alpha \left(\max_i 
\frac{\|A_{(i)}\|_1}{\rho_i} 
\cdot \max_j \|A^{(j)}\|_1 \right)^{1/2} + \beta \max_i \frac{\|A_{(i)}\|_1}{\rho_i} \label{eq:skewderiv} \\ 
&\le&  \alpha \left(\max_i \frac{\|A_{(i)}\|_1}{\rho_i} \cdot \min_i \|A_{(i)}\|_1 \right)^{1/2} + \beta \max_i \frac{\|A_{(i)}\|_1}{\rho_i} \label{eq:skewderiva} \\ 
&\le& \max_i\left[ \alpha \left(\frac{\|A_{(i)}\|_1}{\rho_i} \cdot \min_i \|A_{(i)}\|_1 \right)^{1/2} + \beta \frac{\|A_{(i)}\|_1}{\rho_i}  \right] \label{eq:skewderivb}\\ 
&\le& \max_i\left[ \alpha \frac{\|A_{(i)}\|_1}{\sqrt{\rho_i}}  + \beta \max_i \frac{\|A_{(i)}\|_1}{\rho_i} \right] \nonumber \\
& = & \eps_5(\dist) \nonumber \enspace .
\end{eqnarray}

\end{proof}

\noindent We are now ready to prove our main Theorem.
\begin{proof} [Proof of Theorem \ref{thm:main}]
Recall that above we proved that
$$1/\sqrt{2} \leq \eps_1/\eps_2 \leq 1, \ \ \ 49/50 \leq \eps_2/\eps_3 \leq 51/50, \ \ \ 1 \leq \eps_3/\eps_4 \leq 2 .$$
Let $p$ be a minimizer of $\eps_5$. According to Lemma~\ref{skew}, it is also a minimizer of $\eps_4$.
Hence, it holds that 
$$ \eps(p)/\min_q\{ \eps(q) \} \leq \sqrt{2} \cdot \frac{50}{49} \cdot \frac{51}{50} \cdot 2 \leq 3$$
as required
\end{proof}

We finish with the proof of Theorem~\ref{thm:s as eps} analyzing the value of $\eps(p)$.
 
\begin{proof}[Proof of Theorem \ref{thm:s as eps}]

We start by computing the value of $\eps_1$ as a function of $s,\delta$, for the probability distribution $P_0$ minimizing $\eps_5$. Recall that in deriving~\eqref{de:rhoi} we established that $\eps_5(P_0) = \zeta_0$, where $\zeta_0$ is such that $\sum_{i=1}^m \rho_i(\zeta_0)=1$, i.e.,
\begin{equation}\label{zohoh}
1 = \sum_{i=1}^m \left(\frac{\alpha \|A_{(i)}\|_1}{2\zeta_0} + \sqrt{ \left(\frac{\alpha \|A_{(i)}\|_1}{2\zeta_0}\right)^2 + \frac{\beta \|A_{(i)}\|_1}{\zeta_0} }\right)^2 \leq \sum_{i=1}^m \frac{\alpha^2 \|A_{(i)}\|_1^2}{\zeta_0^2} + \frac{2\beta \|A_{(i)}\|_1}{\zeta_0}  \enspace .
\end{equation}
This yields the following quadratic equation in $\zeta_0$
\begin{equation}\label{zohho}
\zeta_0^2 - \zeta_0 \cdot 2\beta \|A\|_1 - \alpha^2 \sum_i  \|A_{(i)}\|_1^2 \leq 1
\end{equation}
Treating~\eqref{zohho} as an equality and bounding the larger root of the resulting quadratic equation we get
\begin{equation}\label{zohhoo}
\zeta_0 = O\left(\beta \|A\|_1 +  \alpha \sqrt{\sum_i  \|A_{(i)}\|_1^2}  \right) = 
O\left( \frac{\log\left(\tfrac{m+n}{\delta}\right)\|A\|_1}{s} + \sqrt{\frac{\log\left(\tfrac{m+n}{\delta}\right)\sum_i \|A_{(i)}\|_1^2}{s}} \right)  
\end{equation}
The second equality is obtain by replacing $\alpha,\beta$ with their corresponding expressions in Algorithm~\ref{alg:sketch}, line~\ref{ComputeRowDistribution3}: $\alpha = \sqrt{\log((m+n)/\delta)/s}$ and $\beta=\log((m+n)/\delta)/(3s)$.
Recall that to prove Theorem~\ref{thm:main} we proved that if $A$ meets the conditions of Definition~\ref{DM}, then
\[
\min_P \eps_1(\dist) = \Theta(\zeta_0) \enspace . 
\]
It follows that for $\eps^* = \min_P \eps_1(\dist)$,
$$s = O\left(\frac{\log((m+n)/\delta)\sum_i \|A_{(i)}\|_1}{\eps^*} + \frac{\log((m+n)/\delta)\sum_i \|A_{(i)}\|_1^2}{(\eps^*)^2} \right) $$
The theorem now follows by taking $\eps^* = \eps\|A\|$.
\end{proof}


\section{Experiments} \label{sec:experiments}
We experimented with $4$ matrices with different characteristics, these are summarized in the table below. See Section~\ref{sec:compar} for the definition of the different characteristics.
\begin{table}[h!]
\begin{center}
\begin{tabular}{|C{1.4cm} || C{0.9cm} | C{0.9cm} | C{0.9cm} | C{0.9cm} | C{0.9cm} | C{0.9cm} | C{0.9cm} | C{0.9cm} | c |} \hline 
Measure	& $m$	& $n$	& $\nnz(A)$ 	& $\|A\|_1$	& $\|A\|_F$	&$\|A\|_2$		& $\numrank$	& $\numdensity$	& $\numrowdensity$ \\ \hline \hline
Synthetic	& 1.0e+2	& 1.0e+4	& 5.0e+5		& 1.8e+7		& 3.2e+4		& 8.7e+3		& 1.3e+1		& 3.1e+5			& 3.2e+3 \\ \hline
Enron	& 1.3e+4	& 1.8e+5	& 7.2e+5		& 4.0e+9		& 5.8e+6		& 1.0e+6		& 3.2e+1		& 4.9e+5			& 1.5e+3 \\ \hline
Images	& 5.1e+3	& 4.9e+5	& 2.5e+8		& 6.5e+9		& 2.0e+6		& 1.8e+6		& 1.3e+0		& 1.1e+7			& 2.3e+3 \\ \hline
Wikipedia	& 4.4e+5	& 3.4e+6	& 5.3e+8		& 5.3e+9		& 7.5e+5		& 1.6e+5		& 2.1e+1 		& 5.0e+7			& 1.9e+4	\\ \hline
\end{tabular}
\end{center}
\end{table}

\vspace{-.3cm}
\noindent {\bf Enron:} Subject lines of emails in the Enron email corpus~\cite{Sty11}. Columns correspond to subject lines, rows to words, and entries to tf-idf values. 
This matrix is extremely sparse to begin with.\\
\noindent {\bf Wikipedia:} Term-document matrix of a fragment of Wikipedia in English. Entries are tf-idf values.\\
\noindent {\bf Images:} A collection of images of buildings from Oxford \cite{Philbin07}. Each column represents the wavelet transform of a single $128\times 128$ pixel grayscale image.\\
\noindent {\bf Synthetic:} This synthetic matrix simulates a collaborative filtering matrix. Each row corresponds to an item and each column to a user. Each user and each item was first assigned a random latent vector (i.i.d. Gaussian). Each value in the matrix is the dot product of the corresponding latent vectors plus additional Gaussian noise. We simulated the fact that some items are more popular than others by retaining each entry of each item $i$ with probability $1-i/m$ where $i = 0,\ldots,m-1$.

\subsection{Sampling techniques and quality measure}\label{samplings}

The experiments report the accuracy of sampling according to four different distributions. 
In Figure~\ref{plotsplots}, {\bf Bernstein} denotes the distribution of this paper, defined in Lemma~\ref{optDist}. 
The {\bf Row-L1} distribution is a simplified version of the Bernstein distribution, where $\dist_{ij} \propto |A_{ij}| \cdot \|A_{(i)}\|_1$.
{\bf L1} and {\bf L2} refer to $\dist_{ij} \propto |A_{ij}|$ and $\dist_{ij} \propto |A_{ij}|^2$, respectively, as defined earlier in the paper. The case of {\bf L2} sampling was split into three sampling methods corresponding to different trimming thresholds. In the method referred to as {\bf L2} no trimming is made and $\dist_{ij} \propto |A_{ij}|^2$. In the case referred to as {\bf L2 trim 0.1}, $\dist_{ij} \propto |A_{ij}|^2$ for any entry where $|A_{ij}|^2 > 0.1 \cdot \E_{ij}[|A_{ij}|^2]$ and $\dist_{ij}=0$ otherwise. The sampling technique referred to as {\bf L2 trim 0.01} is analogous with threshold $0.01 \cdot \E_{ij}[|A_{ij}|^2]$.


Although to derive our sampling probability distributions we targeted minimizing $\|A-B\|_2$, in experiments it is
more informative to consider a more sensitive measure of quality of approximation. The reason is that, due to scaling, for a number of values of $s$ one has $\|A-B\|_2 > \|A\|_2$ which would suggest that the all zeros matrix is a better sketch for $A$ than the sampled matrix. 
We will see that this is far from being the case. As a trivial example, consider the possibility $B \approx 10A$. 
Clearly, $B$ is very informative of $A$ although  $\|A-B\| \ge 9\|A\|$. 
To avoid this pitfall, we measure $\|P_{k}^B A \|_F/\|A_k\|_F$, where $P_k^{B}$ is the projection on the top $k$ left singular vectors of $B$. 
Here, $A_k = P_k^A A$ is the optimal rank $k$ approximation of $A$. 
Intuitively, this measures how well the top $k$ left singular vectors of $B$ capture $A$, compared to $A$'s own top-$k$ left singular vectors. 
We also compute $\|A Q_k^B\|_F/\|A_k\|_F$ where $Q_k^{B}$ is the projection on the top $k$ right singular vectors of $A$. 
Note that, for a given $k$, approximating the row-space is harder than approximating the column-space since it is of dimension $n$ which is significantly larger than $m$, a fact also borne out in the experiments. In the experiments we made sure to choose a sufficiently wide range of sample sizes so that at least the best method for each matrix goes from poor to near-perfect both in approximating the row and the column space. 
In all cases we report on $k=20$ which is close to the upper end of what could be efficiently computed on a single machine for matrices of this size. The results for all smaller values of $k$ are qualitatively indistinguishable.

\newcommand{\myfigurewidth}{7.5cm}
\newcommand{\myfigurehight}{5cm}

\begin{figure}[htbp]
\label{results_figure}
\begin{center}
\includegraphics[width=14cm]{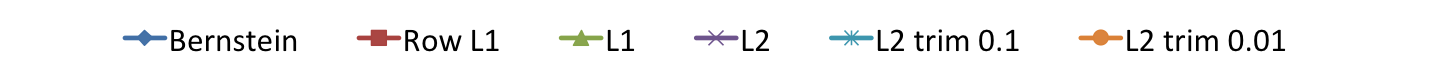} \\
\includegraphics[width=\myfigurewidth , height=\myfigurehight]{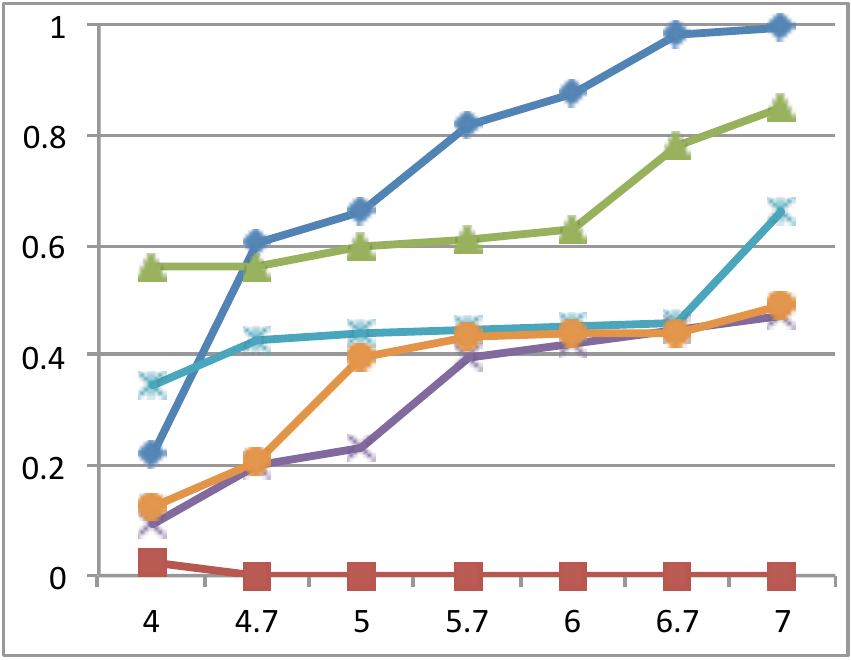}
\includegraphics[width=\myfigurewidth , height=\myfigurehight]{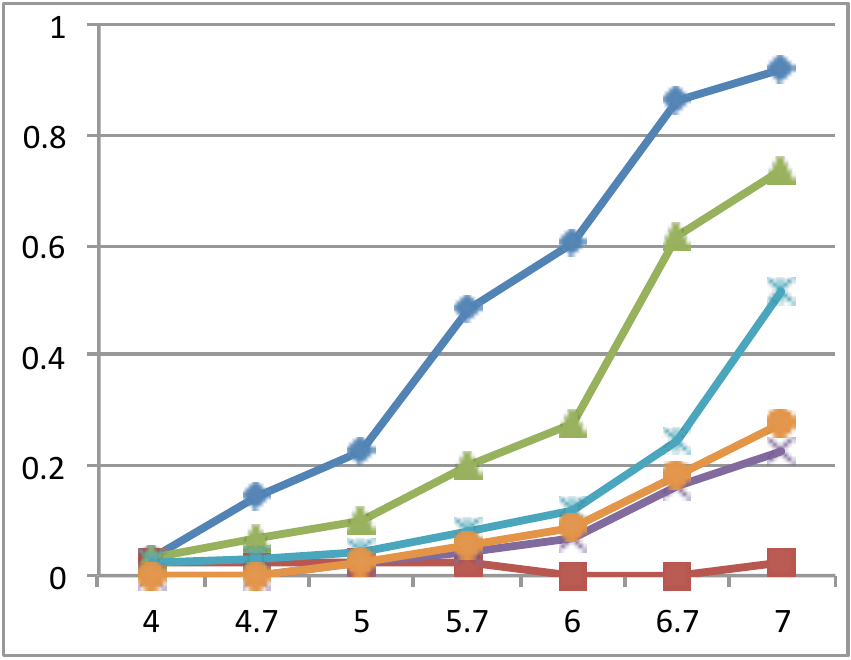}
\includegraphics[width=\myfigurewidth , height=\myfigurehight]{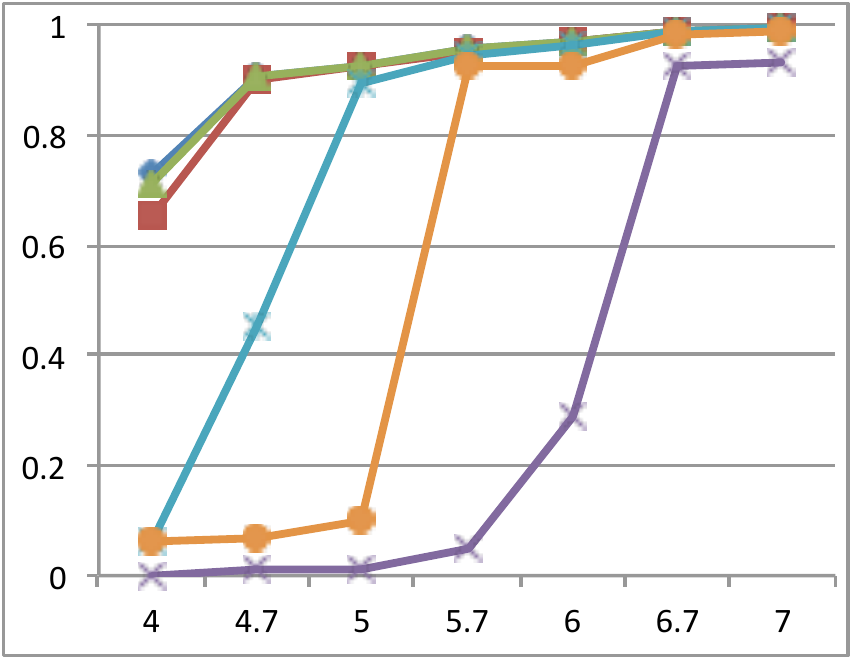}
\includegraphics[width=\myfigurewidth , height=\myfigurehight]{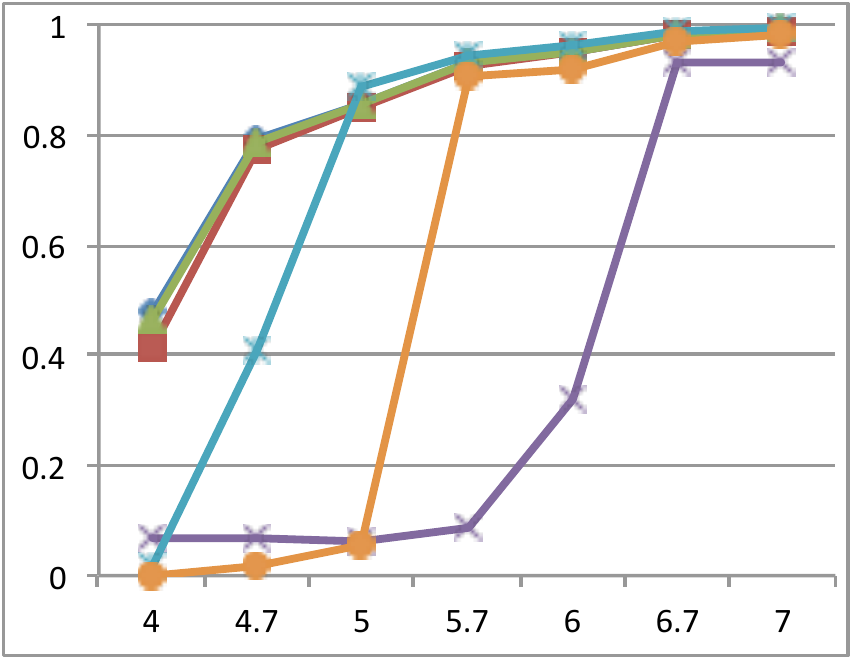}
\includegraphics[width=\myfigurewidth , height=\myfigurehight]{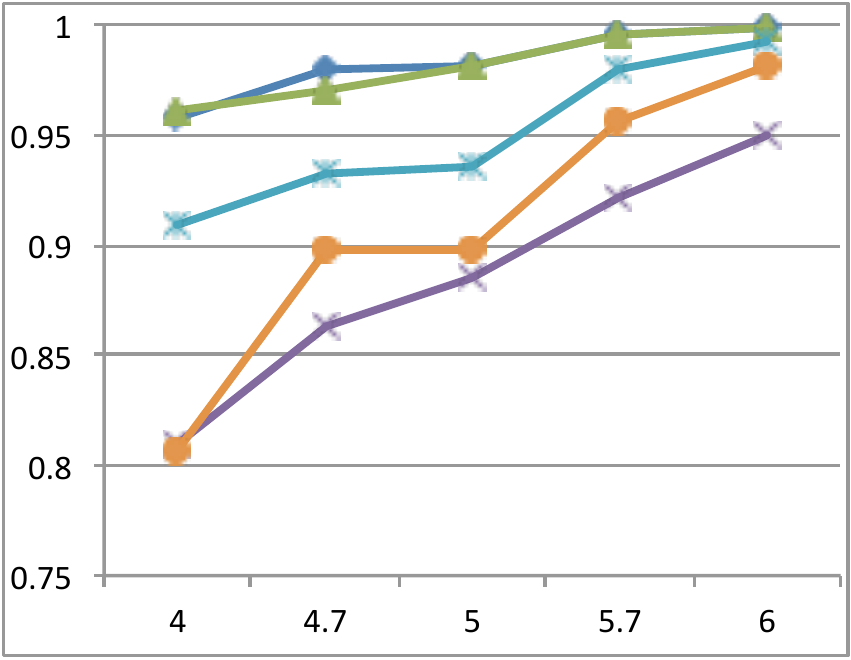}
\includegraphics[width=\myfigurewidth , height=\myfigurehight]{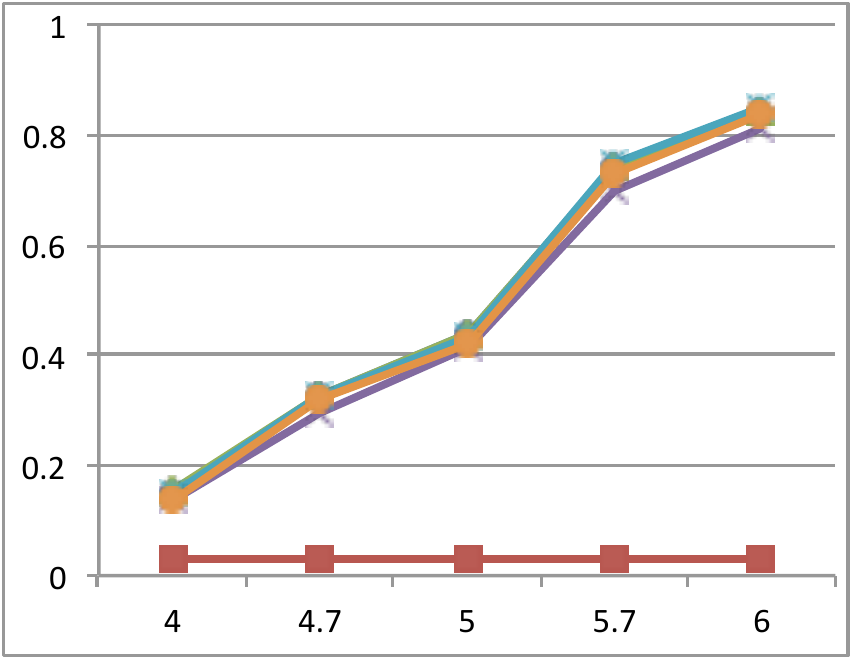}
\includegraphics[width=\myfigurewidth , height=\myfigurehight]{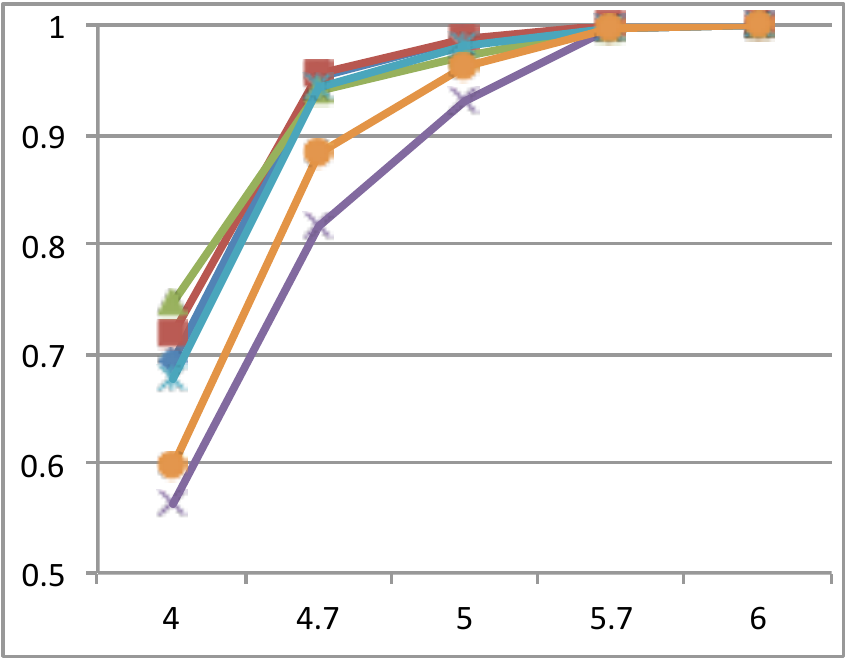}
\includegraphics[width=\myfigurewidth , height=\myfigurehight]{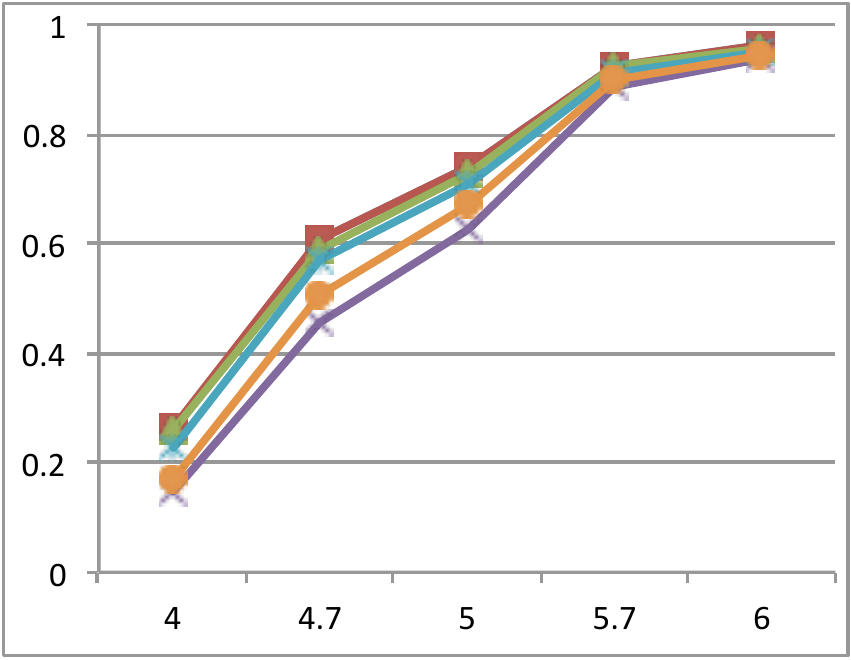}
\caption{
Each horizontal pair of plots corresponds to one matrix. Top to bottom: Wikipedia, Images, Enron, Synthetic . Each left plot shows the quality of approximation ratio, $\|P^k_B A\|_F/\|A_k\|$ (right plots show $\|A Q^k_B\|_F/\|A_k\|$). The number of samples $s$ is on the $x$-axis in log scale $x = \log_{10}(s)$. 
}
\end{center}
\label{plotsplots}
\end{figure}

\subsection{Insights}
The experiments demonstrate three main insights. First and most important, Bernstein-sampling is never worse than any of the other techniques and is often strictly better.  A dramatic example of this is the Wikipedia matrix for which it is far superior to all other methods. The second insight is that L1-sampling, i.e., simply taking $\dist_{ij} = |A_{ij}|/\|A\|_1$, performs rather well in many cases. Hence, if it is impossible to perform more than one pass over the matrix and one can not even obtain an estimate of the ratios of the L1-weights of the rows, L1-sampling seems to be a highly viable option.
The third insight is that for L2-sampling, discarding small entries may drastically improve the performance. However, it is not clear which threshold should be chosen in advance. In any case, in all of the example matrices, both L1-sampling and Bernstein-sampling proved to outperform or perform equally to L2-sampling, even with the correct trimming threshold.

\bibliography{matrixSampling}

\newcommand{\etalchar}[1]{$^{#1}$}
\begin{thebibliography}{PCI{\etalchar{+}}07}

\bibitem[AHK05]{arora2005fast}
Sanjeev Arora, Elad Hazan, and Satyen Kale.
\newblock Fast algorithms for approximate semidefinite programming using the
  multiplicative weights update method.
\newblock In {\em Foundations of Computer Science, 2005. FOCS 2005. 46th Annual
  IEEE Symposium on}, pages 339--348. IEEE, 2005.

\bibitem[AHK06]{AroraHazanKale2006}
Sanjeev Arora, Elad Hazan, and Satyen Kale.
\newblock A fast random sampling algorithm for sparsifying matrices.
\newblock In {\em Proceedings of the 9th international conference on
  Approximation Algorithms for Combinatorial Optimization Problems, and 10th
  international conference on Randomization and Computation},
  APPROX'06/RANDOM'06, pages 272--279, Berlin, Heidelberg, 2006.
  Springer-Verlag.

\bibitem[AKV02]{kri_vu}
Noga Alon, Michael Krivelevich, and VanH. Vu.
\newblock On the concentration of eigenvalues of random symmetric matrices.
\newblock {\em Israel Journal of Mathematics}, 131:259--267, 2002.

\bibitem[AM01]{achlioptas2001fast}
Dimitris Achlioptas and Frank McSherry.
\newblock Fast computation of low rank matrix approximations.
\newblock In {\em Proceedings of the thirty-third annual ACM symposium on
  Theory of computing}, pages 611--618. ACM, 2001.

\bibitem[AM07]{AchlioptasMcsherry2007}
Dimitris Achlioptas and Frank Mcsherry.
\newblock Fast computation of low-rank matrix approximations.
\newblock {\em J. ACM}, 54(2), april 2007.

\bibitem[AW02]{AhlswedeW02}
Rudolf Ahlswede and Andreas Winter.
\newblock Strong converse for identification via quantum channels.
\newblock {\em IEEE Transactions on Information Theory}, 48(3):569--579, 2002.

\bibitem[Ber07]{berkopec2007hyperquick}
Ale{\v{s}} Berkopec.
\newblock Hyperquick algorithm for discrete hypergeometric distribution.
\newblock {\em Journal of Discrete Algorithms}, 5(2):341--347, 2007.

\bibitem[CR09]{candes2009exact}
Emmanuel~J Cand{\`e}s and Benjamin Recht.
\newblock Exact matrix completion via convex optimization.
\newblock {\em Foundations of Computational mathematics}, 9(6):717--772, 2009.

\bibitem[CT10]{candes2010power}
Emmanuel~J Cand{\`e}s and Terence Tao.
\newblock The power of convex relaxation: Near-optimal matrix completion.
\newblock {\em Information Theory, IEEE Transactions on}, 56(5):2053--2080,
  2010.

\bibitem[d'A08]{d2008subsampling}
Alexandre d'Aspremont.
\newblock Subsampling algorithms for semidefinite programming.
\newblock {\em arXiv preprint arXiv:0803.1990}, 2008.

\bibitem[DKM06]{DrineasKMW2006}
Petros Drineas, Ravi Kannan, and Michael~W. Mahoney.
\newblock Fast monte carlo algorithms for matrices; approximating matrix
  multiplication.
\newblock {\em SIAM J. Comput.}, 36(1):132--157, July 2006.

\bibitem[DZ11]{DrineasZ11}
Petros Drineas and Anastasios Zouzias.
\newblock A note on element-wise matrix sparsification via a matrix-valued
  bernstein inequality.
\newblock {\em Inf. Process. Lett.}, 111(8):385--389, 2011.

\bibitem[FK81]{FK}
Z.~F{\"u}redi and J.~Koml{\'o}s.
\newblock The eigenvalues of random symmetric matrices.
\newblock {\em Combinatorica}, 1(3):233--241, 1981.

\bibitem[GT09]{gittens2009error}
Alex Gittens and Joel~A Tropp.
\newblock Error bounds for random matrix approximation schemes.
\newblock {\em arXiv preprint arXiv:0911.4108}, 2009.

\bibitem[Juh81]{r_g_spectrum}
F.~Juh{\'a}sz.
\newblock On the spectrum of a random graph.
\newblock In {\em Algebraic methods in graph theory, {V}ol. {I}, {II}
  ({S}zeged, 1978)}, volume~25 of {\em Colloq. Math. Soc. J\'anos Bolyai},
  pages 313--316. North-Holland, Amsterdam, 1981.

\bibitem[NDT09]{nguyen2009matrix}
NH~Nguyen, Petros Drineas, and TD~Tran.
\newblock Matrix sparsification via the khintchine inequality, 2009.

\bibitem[NDT10]{nguyen2010tensor}
Nam~H Nguyen, Petros Drineas, and Trac~D Tran.
\newblock Tensor sparsification via a bound on the spectral norm of random
  tensors.
\newblock {\em arXiv preprint arXiv:1005.4732}, 2010.

\bibitem[PCI{\etalchar{+}}07]{Philbin07}
J.~Philbin, O.~Chum, M.~Isard, J.~Sivic, and A.~Zisserman.
\newblock Object retrieval with large vocabularies and fast spatial matching.
\newblock In {\em Proceedings of the IEEE Conference on Computer Vision and
  Pattern Recognition}, 2007.

\bibitem[Rec11]{Recht2011}
Benjamin Recht.
\newblock A simpler approach to matrix completion.
\newblock {\em J. Mach. Learn. Res.}, 12:3413--3430, December 2011.

\bibitem[RV07]{RudelsonVershyninMatrixSampling2007}
Mark Rudelson and Roman Vershynin.
\newblock Sampling from large matrices: An approach through geometric
  functional analysis.
\newblock {\em J. ACM}, 54(4), July 2007.

\bibitem[Sty11]{Sty11}
Will Styler.
\newblock The enronsent corpus.
\newblock In {\em Technical Report 01-2011, University of Colorado at Boulder
  Institute of Cognitive Science, Boulder, CO.}, 2011.

\bibitem[Tro12a]{Tropp12}
Joel~A. Tropp.
\newblock User-friendly tail bounds for sums of random matrices.
\newblock {\em Foundations of Computational Mathematics}, 12(4):389--434, 2012.

\bibitem[Tro12b]{tropp2012user}
Joel~A Tropp.
\newblock User-friendly tail bounds for sums of random matrices.
\newblock {\em Foundations of Computational Mathematics}, 12(4):389--434, 2012.

\bibitem[Wig58]{semicircle}
Eugene~P. Wigner.
\newblock On the distribution of the roots of certain symmetric matrices.
\newblock {\em Annals of Mathematics}, 67(2):pp. 325--327, 1958.

\end{thebibliography}

\bibliographystyle{alpha}

\newpage

\appendix

\section{Efficient Parallel Reservoir Sampling} \label{sec:rsrvr_sample}
Assume we receive a stream of items each having weight $w_i$.
Further assume that we want to sample a single item from the stream with probability $p_i = w_i/W$ where $W = \sum_i{w_i}$. 
Reservoir sampling is the classic solution to this problem: 
select the very first item in the stream as the ``current" sample and from then on have each successive item $i$ replace the current sample with probability $w_i/W_i$, where $W_i = \sum_{j \le i} w_j$.

Assume now that, instead, we wanted to take $s>1$ items from the stream, but as if the stream was a set and we could sample it \emph{with} replacement. 
One way to do this is to execute $s$ independent reservoir samplers as above in parallel, as was pointed out in~\cite{DrineasZ11}. 
This, however, requires $O(s)$ active memory and $O(s)$ randomized operations \emph{per item in the stream}.

In the formation of the sketch matrix $B$ a potentially large number of samples $s=\nnz(B)$ can make this approach impractical. 
Below we describe an algorithm that requires only $O(\log s)$ \emph{active} memory and $O(1)$ operations per item, instead of $O(s)$ memory and $O(s)$ operations per item, respectively. 
The first idea is to use the fact that samplers are independent.
We can therefore simulate the process above by determining for each item, $a$, the (random) number of samplers, $k$, that would have replaced their current sample with $a$ when it appeared. 
This random variable is Bernouli distributed and can be sampled efficiently. 
If this number is greater than zero, we write item $a$ along with $k$ to durable storage (disk) and process the next item in the stream. 
This processing generates a sketch of the stream on disk, the length of which can  be shown to be bounded by $O(s \log(bN))$, where $b \eqdef \max_i w_i/\min_{i} w_i$. 
Here we can safely assume $w_i > 0$.

When the stream terminates, we process the sketch from \emph{end to beginning} as follows: for each pair $(a,k)$ we encounter in the sketch we process the $k$ update operations as the throwing of $k$ balls into $s$ bins uniformly at random. 
This is because, whether item $a$ replaces the current sample, $a'$, of a particular sampler is independent of $a'$. 
Notice that since we are going over the sketch backwards, the very first ball we place in a bin corresponds to the very last update of the sampler in the original execution. Thus, for each bin, we ignore all but the first ball placement and we stop as soon as each bin has received a ball (thus we also avoid simulating the ``irrelevant" part of the naive computation). 
Performing this simulation only requires a bit-vector of length $s$ in active memory.

Finally, we can avoid even the cost of the bit-vector, as follows. Note that we do not care about the order of the samplers. Only the \emph{number} of samplers that pick any item is important.
Therefore, we can simply track the number of empty bins $\ell$ (samplers that are not committed yet) instead of the whole list and update it every time some balls fall into empty bins.
\begin{algorithm}[h!]
\begin{algorithmic}
\label{sample_alg}
\State {\bf Input:} An integer $s$ and a stream $(a_1,w_1), (a_2,w_2),...$ 
\State $W \leftarrow 0$, \; \; $T \leftarrow$ empty stack
\For{$(a, w) \in$ the stream}
	\State $W \leftarrow W + w$
	\State $p = w/W$
	\State  $k = \operatorname{binomial}(s,p)$ \Comment{Number of reservoir samplers that would have picked item $a$.}
	\If {$k > 0$}
		\State Push $(a,k)$ onto $T$
	\EndIf
\EndFor
\State $\ell = s$ \Comment{$\ell$ holds the number of samplers that did not commit on an item yet.}
\While{$\ell > 0$}
	\State $(a,k) = \operatorname{pop}(T)$
	\State $t = \operatorname{hypergeometric}(s,\ell,k)$
	\If {$t > 0$} \Comment{$t$ samplers committed to item $a$.}
		\State $\ell = \ell - t$
		\State {\bf yield:} $(a,t)$
	\EndIf
\EndWhile
\end{algorithmic}
\end{algorithm}
The hypergeometric distribution $\operatorname{hypergeometric}(s,\ell,k)$ (see e.g \cite{berkopec2007hyperquick} for a more thorough overview) assigns each integer $t$ probability ${\ell \choose t} {s - \ell \choose k - t} / {s \choose k}$. 
In words, assume we have $s$ bins only $\ell$ of which are empty. If we throw $k$ balls to $k$ different bins uniformly at random, the number of balls that fall in empty bins distributes as $\operatorname{hypergeometric}(s,\ell,k)$. 

\end{document}